\theoremstyle{plain}
\newtheorem{theorem}{Theorem}[section]
\newtheorem{proposition}[theorem]{Proposition}
\newtheorem{lemma}[theorem]{Lemma}
\newtheorem{corollary}[theorem]{Corollary}
\theoremstyle{definition}
\theoremstyle{remark}
\newtcolorbox{mybox}{colframe = black!10!black}
\title{Fine-Grained Dynamic Framework for Bias-Variance Joint Optimization on Data Missing Not at Random}
\author{
  Mingming Ha\\
  MYbank, Ant Group\\
  Beijing, China\\
  \texttt{hamingming.hmm@mybank.cn} \\
  \And
  Xuewen Tao \\
  MYbank, Ant Group\\
  Shanghai, China \\
  \texttt{xuewen.txw@mybank.cn} \\
  \AND
  Wenfang Lin \\
  MYbank, Ant Group\\
  Hangzhou, China\\
  \texttt{moxi.lwf@mybank.cn} \\
  \And
  Qiongxu Ma \\
  MYbank, Ant Group\\
  Shanghai, China \\
  \texttt{qiongxu.mqx@mybank.cn} \\
  \And
  Wujiang Xu \\
  MYbank, Ant Group\\
  Shanghai, China \\
  \texttt{xuwujiang.xwj@mybank.cn} \\
  \And
  Linxun Chen \\
  MYbank, Ant Group\\
  Beijing, China\\
  \texttt{linxun.clx@mybank.cn} \\
}
\begin{document}
\maketitle

\begin{abstract}
In most practical applications such as recommendation systems, display advertising, and so forth, the collected data often contains missing values and those missing values are generally missing-not-at-random, which deteriorates the prediction performance of models. Some existing estimators and regularizers attempt to achieve unbiased estimation to improve the predictive performance. However, variances and generalization bound of these methods are generally unbounded when the propensity scores tend to zero, compromising their stability and robustness. In this paper, we first theoretically reveal that limitations of regularization techniques. Besides, we further illustrate that, for more general estimators, unbiasedness will inevitably lead to unbounded variance. These general laws inspire us that the estimator designs is not merely about eliminating bias, reducing variance, or simply achieve a bias-variance trade-off. Instead, it involves a quantitative joint optimization of bias and variance. Then, we develop a systematic fine-grained dynamic learning framework to jointly optimize bias and variance, which adaptively selects an appropriate estimator for each user-item pair according to the predefined objective function. With this operation, the generalization bounds and variances of models are reduced and bounded with theoretical guarantees. Extensive experiments are conducted to verify the theoretical results and the effectiveness of the proposed dynamic learning framework.
\end{abstract}

\section{Introduction}
\label{Introduction}
In virtually all real-world applications, the pieces of data we collected are partially missing with certain probabilities. A special case with the identical missing probability is known as missing at random (MAR) \cite{zadrozny2004learning}. 
However, in online recommendation, search, and display advertising, there are lots of missing-not-at-random (MNAR) click, conversion, and rating records 
\cite{ma2018modeling,ma2018entire,xi2021modeling},
which are missing with different probabilities, i.e., propensities. 
For example, in recommendation systems, 
a user usually clicks the items that she/he is likely to purchase and ignores other items with a low willingness to buy. Therefore, the observed click and conversion data is MNAR, which are not representative samples of all the events \cite{guo2021enhanced}. 
When the MNAR data is used to train a model, the prediction performance of this model on the MAR data is generally unacceptable. This is because MNAR data introduces sample selection bias \cite{ma2018entire,schnabel2016recommendations} into the prediction model. To eliminate sample selection bias, lots of debiasing estimators \cite{ma2018entire,schnabel2016recommendations,dai2022generalized,li2023propensity,chen2021autodebias} have been developed, e.g., Error-Imputation-Based (EIB) approach \cite{steck2010training}, Inverse Propensity-Scoring (IPS) technique \cite{schnabel2016recommendations}, Doubly Robust (DR) method \cite{wang2019doubly}, and so forth. 

However, in almost all debiased methods,
the existence of propensities results in the high variance and generalization bound. \cite{wang2019doubly,li2022stabilized}. Therefore, various methods \cite{guo2021enhanced,li2023propensity,li2022stabilized} have been developed to reduce estimation variances and improve the model stability. Even so, they still suffer from unbounded variances and generalization bounds when the propensity tends to zero. 
For the high variance and generalization bound caused by small propensities, some approaches compromise to self-normalized technique \cite{li2022stabilized,swaminathan2015self} at the expense of unbiasedness.
In addition, the overwhelming majority of previous works focus on the specific designs of the estimators or regularizers to reduce variance or eliminate bias while neglecting both the bias-variance relationship of estimators and the essence of the estimator designs. 

In this paper, we reveal limitations of general regularization techniques. We find that it is impossible to reduce variance without sacrificing unbiasedness by introducing regularizers, and that regularization cannot guarantee estimators to have bounded variance and generalization bound. Besides, for general estimators, unbiasedness will inevitably result in unbounded variance and generalization bound. Since the generalization bounds of estimators contain the bias and variance terms, the essence of estimator design is not merely about eliminating bias, reducing variance, or simply achieving a bias-variance trade-off but about the quantitative joint optimization of bias and variance. Then, we develop a systematic dynamic learning framework to achieve this objective. To the best of our knowledge, this is the first work to systematacially reveal limitations of general regularizers and the design perspective of the quantitative bias-variance joint optimization. Our main contributions can be summarized as follows: 
\begin{itemize}
    \item[1)] We theoretically elaborate limitations of regularization techniques, and the relationship of unbiasedness, variance and generalization bound of general estimators.
    \item[2)] Based on the general laws, we elaborate a novel design perspective for the estimator, namely the quantitative bias-variance joint optimization;
    \item[3)] We develop a comprehensive dynamic learning framework to optimize a weighted objective with respect to bias and variance for each user-item pair $(u,i)$, which dynamically selects different estimators for different user-item pair from a family of estimators according to the given objective function. It is guaranteed that the generalization bounds and variances are reduced and bounded;
    \item[4)] We conduct extensive experiments to verify the theoretical results and the performance of the dynamic regularizer and estimators.
\end{itemize}

\section{Preliminaries}
\label{Preliminaries}
\paragraph{Data missing not at random.} Denote the sets of users and items as $\mathcal{U}=\{u_1,u_2,\dotsc,u_M\}$ and $\mathcal{I}=\{i_1,i_2,\dotsc,i_N\}$, respectively. The set of all user-item pairs is denoted as $\mathcal{D}=\mathcal{U}\times\mathcal{I}$. Define the true and prediction matrices as $Y\in\mathbb{R}^{M\times{N}}$ and $\hat{Y}\in\mathbb{R}^{M\times{N}}$, where prediction tasks include rating, CTR and CVR predictions, and so forth. Each element $y_{u,i}$ in $Y$ and each entry $\hat{y}_{u,i}$ in $\hat{Y}$ are the true label and predicted output of a user $u$ to an item $i$. 
In general, it is impossible to observe all entries in the matrix $Y$. The indicator entry of revealed elements is defined as $o_{u,i}\in\{0,1\}$. If the true label $y_{u,i}$ is revealed, the indicator entry of $(u,i)$ satisfies $o_{u,i}=1$. If an entry in $Y$ is missing, then $o_{u,i}=0$. The corresponding indicator set is denoted as $\mathcal{O}=\{o_{u,i}=1\}$.
Considering the case that no entries are missing, the prediction inaccuracy \cite{wang2019doubly} of $\hat{Y}$ is defined as 
\begin{small}
\begin{equation}
\begin{aligned}
{L}_\text{real}(\hat{Y},Y)=\frac{1}{MN}\sum_{u=1}^{M}\sum_{i=1}^{N}e_{u,i}=\frac{1}{\vert{\mathcal{D}}\vert}\sum_{(u,i)\in\mathcal{D}}e_{u,i},\nonumber
\end{aligned}
\end{equation}
\end{small}
where $e_{u,i}$ is the prediction error. $e_{u,i}$ can be selected as mean absolute error (MAE), mean square error (MSE) or other measures. The objective of prediction problems is to minimize the prediction inaccuracy $L_\text{real}(\hat{Y},Y)$ \cite{guo2021enhanced,dai2022generalized,li2023propensity,wang2019doubly,li2022stabilized,ma2019missing}. Actually, only the observed label set $Y^{o}$ can be used to establish the prediction model. The naive approach uses $Y^{o}$ to minimize the following prediction inaccuracy:
\begin{small}
\begin{equation}
\begin{aligned}
L_\text{naive}(\hat{Y},Y^o)=\frac{1}{\vert\mathcal{O}\vert}\sum_{(u,i)\in\mathcal{O}}e_{u,i}=\frac{1}{\vert\mathcal{O}\vert}\sum_{(u,i)\in\mathcal{D}}o_{u,i}e_{u,i}.\nonumber
\end{aligned}
\end{equation}
\end{small}
As mentioned in \cite{wang2019doubly}, if the probability of every entry $y_{u,i}$ in $Y$ being missing is identical, then the naive estimator is unbiased, that is $\mathbb{E}_{O}[{L}_\text{naive}]=L_\text{real}$, where $O$ is taken to represent the random variable of observation. The unbiased estimation property of the naive approach is no longer valid when the data is MNAR, which even results in a large difference between ${L}_\text{real}$ and $\mathbb{E}_{O}[{L}_\text{naive}]$. 

\paragraph{Quantitative Bias-Variance Joint Optimization.}
Considering the large difference between ${L}_\text{real}$ and $\mathbb{E}_{O}[{L}_\text{naive}]$, various unbiased estimation methods have been developed to overcome this problem, such as EIB \cite{steck2010training}, IPS estimator \cite{schnabel2016recommendations}, DR method \cite{wang2019doubly}, and various variations of them \cite{guo2021enhanced,dai2022generalized,li2023propensity,li2022stabilized,swaminathan2015self,li2022tdr}. The corresponding estimators are given as follows:
\begin{small}
\begin{equation}
\begin{aligned}
L_\text{EIB}(\hat{Y},Y^o)=&\frac{1}{\vert\mathcal{D}\vert}\sum_{(u,i)\in\mathcal{D}}[o_{u,i}e_{u,i}+(1-o_{u,i})\hat{e}_{u,i}],\nonumber\\
L_\text{IPS}(\hat{Y},Y^o)=&\frac{1}{\vert\mathcal{D}\vert}\sum_{(u,i)\in\mathcal{D}}\frac{o_{u,i}}{\hat{p}_{u,i}}e_{u,i},\nonumber\\
L_\text{DR}(\hat{Y},Y^o)=&\frac{1}{\vert\mathcal{D}\vert}\sum_{(u,i)\in\mathcal{D}}\Big[\hat{e}_{u,i}+\frac{o_{u,i}}{\hat{p}_{u,i}}(e_{u,i}-\hat{e}_{u,i})\Big],
\end{aligned}
\end{equation}
\end{small}
where $\hat{e}_{u,i}=w\vert{\hat{y}_{u,i}-\gamma}\vert$ for MAE or $\hat{e}_{u,i}=w({\hat{y}_{u,i}-\gamma})^2$ for MSE of missing entries $y_{u,i}$ is the imputed errors, and $\hat{p}_{u,i}\in(0,1)$ is the estimation of the observation propensity, i.e., $p_{u,i}=\text{Pr}(o_{u,i}=1)\in(0,1)$. Note that $w$ and $\gamma$ are hyper-parameters \cite{steck2010training}. For the naive, EIB, IPS, and DR estimators, their biases, variances and generalization bounds are summarized in Table \ref{Tab_0201} (see Appendix \ref{BiasVariance} for more details), where $\Delta_{u,i}=1-\frac{p_{u,i}}{\hat{p}_{u,i}}$ and $\delta_{u,i}=e_{u,i}-\hat{e}_{u,i}$.
In general, the learning of the imputation model also involves the MNAR problem. Some joint learning algorithms \cite{wang2019doubly,li2022stabilized} employ the propensity model to overcome this problem. Therefore, propensity estimation has a crucial role in unbiasedness and robustness. Besides, it is difficult to accurately estimate imputed errors for all user-item pair $(u,i)$ in the sense that it is difficult to achieve the unbiasedness of the EIB estimator. If the propensity estimation $\hat{p}_{u,i}$ is accurate, that is $\hat{p}_{u,i}=p_{u,i}$, then IPS and DR estimators are unbiased while the variances of IPS and DR are unbounded. Specifically, the smaller the propensity, the larger the variance. When the propensity tends to zero, the variance tends to infinity (see Appendix \ref{UnboundedVariance} for more details).
Similarly, variances of other IPS-based and DR-based unbiased estimation methods \cite{li2022tdr} are also unbounded. On the other hand, although the variances of naive and EIB methods are bounded when the prediction error $e_{u,i}$ is bounded, it is difficult and even impossible to achieve an unbiased estimation. Other variance reduction estimation methods \cite{guo2021enhanced,dai2022generalized,li2023propensity,li2022stabilized} are generally biased. According to the expressions of estimators and Table \ref{Tab_0201}, the bias and variance of an estimation are determined by the random variable $O$. We found that slightly relaxing the requirements for unbiasedness will lead to a bounded variance for all propensities. Therefore, the core problem of estimation on MNAR data is the bias-variance trade-off. However, two problems still need to be addressed.
\begin{enumerate}
    \item[1)] \textit{How debiasing and variance reduction affect each other?}
    \item[2)] \textit{What does merely debiasing, reducing variance, or simply achieving a bias-variance trade-of mean for the model performance?}
\end{enumerate}

\section{Fine-Grained Dynamic Framework for Quantitative Bias-Variance Joint Optimization}
\label{EstimatorDesigns}
In this section, we first discuss limitations of regularization techniques and the relationship between unbiasedness of the generalized estimator and its generalization bound, which illustrate the core of the fine-grained estimator design. Then, the dynamic estimation framework for quantitative bias-variance optimization is present. Its generalization bounds and
variances are reduced and bounded with theoretical guarantees.
\subsection{Limitations of Regularization Techniques}
\label{LimitationsofReg}
Define the general form of the estimator with regularization as 
\begin{small}
\begin{equation}
\label{Eq_3.1_01}
\begin{aligned} 
L_\text{Est+Reg}=\underbrace{\frac{1}{\vert\mathcal{D}\vert}\sum_{(u,i)\in\mathcal{D}}\Big[f(o_{u,i},\hat{p}_{u,i})e_{u,i}+g(o_{u,i},\hat{p}_{u,i})\hat{e}_{u,i}\Big]}_{L_\text{Est}}+\lambda\underbrace{\frac{1}{\vert\mathcal{D}\vert}\sum_{(u,i)\in\mathcal{D}}h(o_{u,i},\hat{p}_{u,i})}_{L_{\text{Reg}}},
\end{aligned}
\end{equation}
\end{small}
where $f(\cdot,\cdot)\ne0$ with $f(0,\hat{p}_{u,i})=0$,  $g(\cdot,\cdot)$, and $h(\cdot,\cdot)$ are functions with respect to $o$ and $\hat{p}$. $L_\text{Est}$ and ${L}_{\text{Reg}}$ are prediction
inaccuracies of the estimator and regularizer, respectively. $\lambda>0$ is a scalar weight. The generalized estimator form $L_{\text{Est}}$ given in Eq. (\ref{Eq_3.1_01}) covers the vast majority of existing estimators involving EIB \cite{steck2010training}, IPS \cite{schnabel2016recommendations}, DR \cite{wang2019doubly}, More Robust DR (MRDR) \cite{guo2021enhanced}, Targeted DR (TDR) \cite{li2022tdr}, MIS \cite{wang2021combating}, IPS/DR-SV \cite{wang2021combating}, and other IPS-based and DR-based methods. On the other hand, almost all existing regularization designs, including the Sample Variance (SV) \cite{wang2021combating}, mean
inverse square (MIS) \cite{wang2021combating}, Balancing-Mean-Square Error (BMSE) \cite{li2023propensity}, and so forth, can be transformed into the form $L_{\text{Reg}}$ given in (\ref{Eq_3.1_01}).
In previous works, the regularization technique plays a critical role in variance reduction of estimators and improvement of the generalization performance to a certain extent. However, it still have some inevitable limitations described in the following box. 
\begin{mybox}
\textit{Core Results}
\begin{enumerate}[leftmargin=*]
\item[1)] \textit{For the general estimator with regularization $L_\text{Est+Reg}$, it is impossible to reduce variance without sacrificing unbiasedness.} 
\item[2)] \textit{Regularization $L_\text{Reg}$ cannot guarantee a bounded variance and generalization bound.}
\end{enumerate}
\end{mybox}

In what follows, we provide a detailed theoretical analysis to reveal the aforementioned limitations of the regularization technique. Considering the variance of $L_\text{Est+Reg}$, we have 
\begin{small}
\begin{equation}
\begin{aligned}
\mathbb{V}_O[L_\text{Est+Reg}]=\mathbb{V}_O[L_\text{Est}]+2\lambda\text{Cov}(L_\text{Est}, L_\text{Reg})+\lambda^2\mathbb{V}_O[L_\text{Reg}].\nonumber
\end{aligned}
\end{equation}
\end{small}
As mentioned in \cite{li2023propensity}, when the parameter $\lambda$ is set as the optimal parameter $\lambda_\text{opt}=-\frac{\text{Cov}(L_\text{Est}, L_\text{Reg})}{\mathbb{V}_O[L_\text{Reg}]}$, the variance $\mathbb{V}_O[L_\text{Est+Reg}]$ achieves its minimum and satisfies $\mathbb{V}_O[L_\text{Est+Reg}]\leq\mathbb{V}_O[L_\text{Est}]$ in the sense that the regularization term $\lambda{L}_{\text{Reg}}$ enables the estimator $L_\text{Est+Reg}$ to reduce its variance. However, the covariance $\text{Cov}(L_\text{Est}, L_\text{Reg})$ needs to fulfill $\text{Cov}(L_\text{Est}, L_\text{Reg})<0$ as $\lambda>0$. Otherwise, an inappropriate parameter will result in an increased variance. The formal theoretical results are provided by Theorems \ref{Th_3.1_01} and Corollary \ref{Co_3.1_01}, which reveal the limitation 1) (see Appendix \ref{Proofs} for proofs). Corollary \ref{Co_3.1_01} is the contrapositive of Theorems \ref{Th_3.1_01}. 
\begin{theorem}
\label{Th_3.1_01}
Let $L_\text{Est+Reg}$ be defined in (\ref{Eq_3.1_01}) and the estimator $L_\text{Est}$ be unbiased.  
If $L_\text{Est+Reg}$ is unbiased, then the variance of $L_\text{Est+Reg}$ is greater than the one of the original estimator $L_\text{Est}$.
\end{theorem}
\begin{corollary}
\label{Co_3.1_01}
If the variance of $L_\text{Est+Reg}$ is less than the one of the original estimator $L_\text{Est}$, then $L_\text{Est+Reg}$ is not unbiased.
\end{corollary}

We further find that, if the variance of the original estimator is unbounded, the variances of estimators cannot be bounded by introducing a regularizer even if $\hat{p}_{u,i}=p_{u,i}$. The theoretical results are shown in Theorem \ref{Th_3.1_02}(see Appendix \ref{Proofs} for proofs).
\begin{theorem}
\label{Th_3.1_02}
Let the bias of $L_\text{Est+Reg}$ be bounded and the variance of $L_\text{Est}$ satisfy $\lim_{p_{u,i}\to0}\mathbb{V}_O[L_\text{Est}\vert\hat{p}_{u,i}=p_{u,i}]=\infty$.
Then, there exists no regularizer ${L}_{\text{Reg}}$ that enables the variance and generalization bound of the estimator bounded even the learned imputed errors or propensities are accurate.
\end{theorem}
According to the previous works and the present Theorem \ref{Th_3.1_02}, regularizers enable variance reduction to a certain extent while cannot enable estimators to possess bounded variances and generalization bounds. In other words, regularization techniques have limited impact on improving the predictive performance of the model. In the next subsection, a novel perspective of dynamic estimator designs is proposed, which not only achieves quantitative bias-variance joint optimization but also guarantees bounded variances and generalization bounds.
\subsection{Dynamic Estimator Designs With Quantitative Optimization }
\label{DynamicEstimators}
Most of the existing estimators are based on IPS and DR methods, which are elaborately designed to reduce bias or variance. However, all these estimators are static estimators in the sense that they cannot achieve bias-variance joint optimization for each user-item pair $(u,i)$. Even though some methods \cite{guo2021enhanced,li2023propensity} can effectively reduce the variance of estimators, the estimators are biased and the corresponding variances are unbounded. In this subsection, the core results are provided in the following box. Also, based on theses results, we develop a fine-grained dynamic framework with quantitative optimization to guarantee the reduction and boundedness of variances and generalization bounds
\begin{mybox}
\textit{Core Results}
\begin{enumerate}[leftmargin=*]
\item[3)] For the generalized estimator $L_\text{Est}$, unbiasedness of the estimator will inevitably lead to the unbounded variance and generalization bound.
\item[4)] The core of the estimator design involves not merely a simple bias-variance trade-off, but rather a quantitative joint optimization of both bias and variance.
\end{enumerate}
\end{mybox}
We find that the unbiased estimators with general form generally possess unbounded variances, which is formally derived in Theorem 
\ref{theo_4.5}. Its proofs are provided in Appendix \ref{Proofs}.
\begin{theorem}[Limitation of Static Estimator]
\label{theo_4.5}
Given prediction errors $e_{u,i}$, imputed errors $\hat{e}_{u,i}$, and learned propensities $\hat{p}_{u,i}$ for all user-item pairs $(u,i)$, if for any $e_{u,i}-g(0,\hat{p}_{u,i})\hat{e}_{u,i}\ne0$, $L_\text{Est}$ given in (\ref{Eq_3.1_01}) is unbiased, then the corresponding variance and generalization bound are unbounded.
\end{theorem}

According to Theorem \ref{theo_4.5}, the core objective of estimators is not merely about eliminating bias, reducing variance, or simply achieving a bias-variance trade-off but about a quantitative joint optimization between bias and variance. Therefore, as mentioned in \textit{Core Results}, it is necessary to develop a dynamic estimation framework to achieve the quantitative joint optimization.
\begin{figure}
\centering
\centerline{\includegraphics[width=\columnwidth]{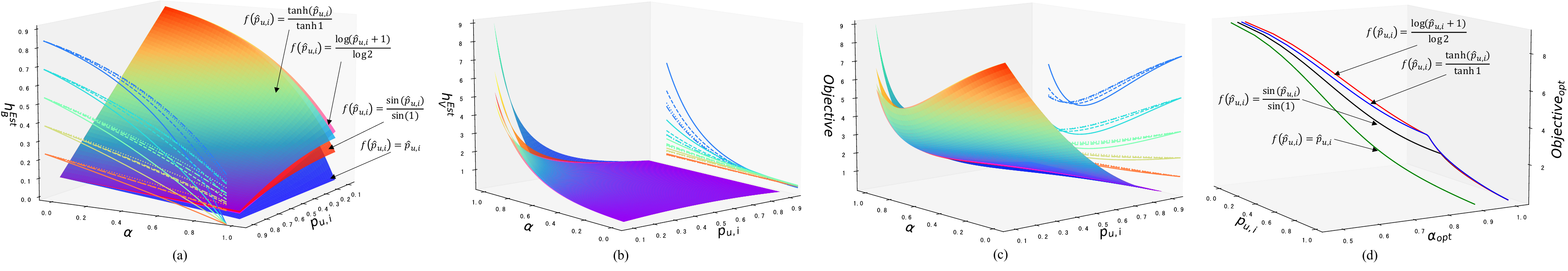}}
\caption{The surfaces of determining factors and the objective function of dynamic estimators, and the optimal objective values: (a) $h^{\text{Est}}_B$; (b) $h^{\text{Est}}_V$; (c) $w_1(h_B^{\text{Est}})^2+w_2h_V^{\text{Est}}$; (d) $\text{Objective}_{\text{opt}}$.}
\label{Fig1_hB_hV}
\end{figure}
\paragraph{Design Principle of Dynamic Estimators.} The IPS-based and DR-based dynamic learning frameworks are designed as 
\begin{small}
\begin{equation}
\label{Eq_4_01}
\begin{aligned}
L_{\text{D-IPS}}=\frac{1}{\vert\mathcal{D}\vert}\sum_{(u,i)\in\mathcal{D}}\frac{o_{u,i}}{f^{\alpha_{u,i}}(\hat{p}_{u,i})}e_{u,i},\;
L_{\text{D-DR}}=\frac{1}{\vert\mathcal{D}\vert}\sum_{(u,i)\in\mathcal{D}}\bigg(\hat{e}_{u,i}+\frac{o_{u,i}}{f^{\alpha_{u,i}}(\hat{p}_{u,i})}\delta_{u,i}\bigg),
\end{aligned}
\end{equation}
\end{small}
where $f(\cdot)$ is a designed function and $\alpha_{u,i}\in[0,1]$ is optimizable parameters. When $f(\hat{p}_{u,i})=\hat{p}_{u,i}$ and $\forall\alpha_{u,i}=1$, D-IPS and D-DR are equivalent to the original IPS and DR estimators, respectively, which possess unbiasedness. When $f(\hat{p}_{u,i})=\hat{p}_{u,i}$ and $\forall\alpha_{u,i}=0$, D-IPS and D-DR are equivalent to $\frac{\vert\mathcal{O}\vert}{\vert\mathcal{D}\vert}L_\text{naive}$ and EIB estimators, which have bounded variances and generalization bounds. The function $f(\hat{p}_{u,i})$ in (\ref{Eq_4_01}) is actually a mapping, which balances the bias and variance of estimators. 
The design principles of $f(\hat{p}_{u,i})$ are provided as follows:
\begin{itemize}
    \item (\textbf{Isotonic Propensity}) $f(\hat{p}_{u,i})$ with $f(0)=0$, $f(1)=1$, and $f(\hat{p}_{u,i})>\hat{p} _{u,i}$ is a monotonically increasing function.
    \item (\textbf{Same Order}) $\lim\limits_{\hat{p}_{u,i}\to0}\frac{\hat{p}_{u,i}}{f(\hat{p}_{u,i})}=C,\forall\alpha_{u,i}\in[0,1]$, where $C$ is a positive constant.
\end{itemize}
Some specific expressions of $f(\hat{p}_{u,i})$ fulfilling the above design principles are summarized in Table \ref{Tab_0301}. The corresponding biases, variances and tail bounds of D-IPS and D-DR estimators are formally formulated in Lemmas \ref{appendix_lemm_01}--\ref{appendix_lemm_03} given in Appendix \ref{Proofs}. From the biases and variances of the D-IPS and D-DR methods given as
\begin{small}
\begin{equation}
\begin{aligned}
&\text{Bias}(L_{\text{D-IPS}})=\frac{1}{\vert\mathcal{D}\vert}\bigg\vert\sum_{(u,i)\in\mathcal{D}}h^{\text{Est}}_B(\hat{p}_{u,i},p_{u,i},\alpha_{u,i})e_{u,i}\bigg\vert,\;\text{Bias}(L_{\text{D-DR}})=\frac{1}{\vert\mathcal{D}\vert}\bigg\vert\sum_{(u,i)\in\mathcal{D}}h^{\text{Est}}_B(\hat{p}_{u,i},p_{u,i},\alpha_{u,i})\delta_{u,i}\bigg\vert,\\
&\mathbb{V}_O[L_{\text{D-IPS}}]=\frac{1}{\vert\mathcal{D}\vert^2}\sum_{(u,i)\in\mathcal{D}}h^{\text{Est}}_V(\hat{p}_{u,i},p_{u,i},\alpha_{u,i})e_{u,i}^2,\;
\mathbb{V}_O[L_{\text{D-DR}}]=\frac{1}{\vert\mathcal{D}\vert^2}\sum_{(u,i)\in\mathcal{D}}h^{\text{Est}}_V(\hat{p}_{u,i},p_{u,i},\alpha_{u,i})\delta^2_{u,i},\nonumber
\nonumber
\end{aligned}
\end{equation}
\end{small}
where
$h^{\text{Est}}_B(\hat{p}_{u,i},p_{u,i},\alpha_{u,i})=1-\frac{p_{u,i}}{f^{\alpha_{u,i}}(\hat{p}_{u,i})}$ and $h^{\text{Est}}_V(\hat{p}_{u,i},p_{u,i},\alpha_{u,i})=\frac{p_{u,i}(1-p_{u,i})}{f^{^{2\alpha_{u,i}}}(\hat{p}_{u,i})}$, functions $h^{\text{Est}}_B$ and $h^{\text{Est}}_V$ determine the biases and variance, respectively. $h^{\text{Est}}_B$ and $h^{\text{Est}}_V$ corresponding the specific expressions of $f(\hat{p}_{u,i})$ are given in Table \ref{Tab_0301}. The monotonicity of bias and variance are provided in Appendix \ref{Proofs} Proposition \ref{appendix_prop_01}. The surfaces of $h^{\text{Est}}_B$ and $h^{\text{Est}}_V$ are plotted in Figs. \ref{Fig1_hB_hV}(a) and (b). It is observed that $h^{\text{Est}}_B$ is monotonically decreasing and $h^{\text{Est}}_V$ is monotonically increasing as the number of ${\alpha_{u,i}}$ increases.
\begin{table}
\caption{The specific expressions of $f^{\alpha_{u,i}}(\hat{p}_{u,i})$ and their determining factors of the bias and variance.}
\label{Tab_0301}
\centering
\begin{tabular}{l|cccc}
\toprule
$f(\hat{p}_{u,i})$ &$\hat{p}_{u,i}$&$\frac{\sin(\hat{p}_{u,i})}{\sin(1)}$&$\frac{\log(\hat{p}_{u,i}+1)}{\log(2)}$ &$\frac{\tanh(\hat{p}_{u,i})}{\tanh(1)}$\\
\midrule
$h_B$ &$1-\frac{p_{u,i}}{\hat{p}^{\alpha_{u,i}}_{u,i}}$&$1-\frac{p_{u,i}\sin^{\alpha_{u,i}}(1)}{\sin^{\alpha_{u,i}}(\hat{p}_{u,i})}$&$1-\frac{p_{u,i}\log^{{\alpha_{u,i}}}(2)}{\log^{\alpha_{u,i}}(\hat{p}_{u,i}+1)}$&$1-\frac{p_{u,i}\tanh^{\alpha_{u,i}}(1)}{\tanh^{\alpha_{u,i}}(\hat{p}_{u,i})}$\\
$h_V$ &$\frac{p_{u,i}(1-p_{u,i})}{\hat{p}^{2{\alpha_{u,i}}}_{u,i}}$ &$\frac{p_{u,i}(1-p_{u,i})\sin^{2{\alpha_{u,i}}}(1)}{\sin^{2{\alpha_{u,i}}}(\hat{p}_{u,i})}$&$\frac{p_{u,i}(1-p_{u,i})\log^{2{\alpha_{u,i}}}(2)}{\log^{2{\alpha_{u,i}}}(\hat{p}_{u,i}+1)}$&$\frac{p_{u,i}(1-p_{u,i})\tanh^{2{\alpha_{u,i}}}(1)}{\tanh^{2{\alpha_{u,i}}}(\hat{p}_{u,i})}$\\
\bottomrule
\end{tabular}
\end{table}
\paragraph{Bias-Variance Quantitative Joint Optimization.}
According to Proposition \ref{appendix_prop_01} given in Appendix \ref{Proofs}, the bias-variance trade-off problem can be quantitatively formalized as the following joint optimization problem:
\begin{small}
\begin{equation}
\begin{aligned}
\label{Reg_OPT_problem}
\text{Objective}=\min_{\alpha_{u,i}}\Big\{w_1\text{Bias}(L(\alpha_{u,i}))+w_2\mathbb{V}_O[L(\alpha_{u,i})]\Big\},\;
\text{s.t.}\; 0\leq\alpha_{u,i}\leq1,
\end{aligned}
\end{equation}
\end{small}
where $w_1$ and $w_2$ are weights of the bias and variance. 

According to determine factors $h^{\text{Est}}_B$ and $h^{\text{Est}}_V$ of bias and variance, respectively, the bias-variance joint optimization problem can be defined as
\begin{small}
\begin{equation}
\label{Est_h_OPT_problem}
\begin{aligned}
\text{Objective}^\text{opt}=\min_{\alpha_{u,i}}\Big\{w_1E_B(h^\text{Est}_B(\alpha_{u,i}))+w_2E_V(h^\text{Est}_V(\alpha_{u,i}))\Big\},\;\text{s.t.}\;0\leq\alpha_{u,i}\leq1.
\end{aligned}
\end{equation}
\end{small}
For each user-item pair $(u,i)$, minimizing $E_B(h^\text{Est}_B)$ and $E_V(h^\text{Est}_V)$ given accurate propensity estimations $\hat{p}_{u,i}$ leads to the bias and variance reduction, respectively. Therefore, the optimal parameter $\alpha_{u,i}\in[0,1]$ for each user-item pair $(u,i)$ can achieve fine-grained bias-variance joint optimization. The function $h^\text{Est}_B$ under $\alpha_{u,i}\in[0,1]$, $f(\hat{p}_{u,i})\ge\hat{p}_{u,i}$ and $\hat{p}_{u,i}=p_{u,i}$ satisfies $h^\text{Est}_B(\hat{p}_{u,i},p_{u,i},\alpha_{u,i})\in[0,1)$. On the other hand, $h^\text{Est}_V$ under $\alpha_{u,i}\in[0,1]$ and $\hat{p}_{u,i}=p_{u,i}$ satisfies $h^\text{Est}_V(\hat{p}_{u,i},p_{u,i},\alpha_{u,i})\in[0,\infty)$. Therefore, the objective function in (\ref{Est_h_OPT_problem}) can be simplified as $w_1h^\text{Est}_B(\alpha_{u,i})+w_2h^\text{Est}_V(\alpha_{u,i})$. Besides, different measure metrics are also applicable for dynamic estimators, such as $E(h^\text{Est})=(h^\text{Est}(\alpha_{u,i}))^2$, $E(h^\text{Est}(\alpha_{u,i}))=\ln(\cosh(h^\text{Est}(\alpha_{u,i})))$, and so on. In what follows, under the objective function $w_1h^\text{Est}_B+w_2h^\text{Est}_V$, the analytical solution of the optimal parameter $\alpha^{\text{opt}}_{u,i}$ is given in Theorem \ref{theo4.9} (see Appendix \ref{Proofs} for proofs). 
\begin{theorem}[The optimal parameter $\alpha^{\text{opt}}_{u,i}$] 
\label{theo4.9} 
Let the learned propensities be accurate, i.e., $\hat{p}_{u,i}=p_{u,i}$. For weights $w_1$ and $w_2$, the objective function $w_1h^\text{Est}_B+w_2h^\text{Est}_V$ under $\alpha_{u,i}\in[0,1]$ achieves its minimum at
\begin{small}
\begin{equation}
\label{optimal_condition}
\begin{aligned}
\alpha^\text{opt}_{u,i}=\min\bigg\{\max\bigg\{\frac{\ln\Big(\frac{2w_2}{w_1}(1-p_{u,i})\Big)}{\ln(f(p_{u,i}))},0\bigg\}, 1\bigg\}.
\end{aligned}
\end{equation}
\end{small}
\end{theorem}
From the expression of the optimal parameter (\ref{optimal_condition}), the optimal solution of (\ref{Est_h_OPT_problem}) under different weights depends on the weight ratio $w_2/w_1$. Next, the generalization bounds of the developed dynamic estimator framework are further discussed. The formalized results are derived in Theorem \ref{theo_0306} (see Appendix \ref{Proofs} for more details). 
\begin{theorem}[Generalization Bounds of D-IPS and D-DR]
\label{theo_0306}
For any finite hypothesis space $\mathcal{H}$ of $\hat{Y}$ and the optimal prediction matrix $\hat{Y}^-$, 
given $\hat{e}_{u,i}$ and $\hat{p}_{u,i}$ for all $(u,i)\in\mathcal{D}$, with probability $1-\rho$, the prediction inaccuracies $L_\text{D-IPS}(\hat{Y}^-,Y)$ and $L_\text{D-DR}(\hat{Y}^-,Y)$ under D-IPS and D-DR have the following upper bounds
\begin{small}
\begin{equation}
\begin{aligned}
{L}_\text{D-IPS}(\hat{Y}^-,Y^O)+\underbrace{\sum_{(u,i)\in\mathcal{D}}\frac{\vert{h}^{\text{Est}}_B(\hat{p}_{u,i},p_{u,i},\alpha_{u,i})e^-_{u,i}\vert}{\vert\mathcal{D}\vert}}_\text{Bias Term}+\underbrace{h_G^{\text{Est}}({e}^+_{u,i})}_\text{Variance Term},\\
{L}_\text{D-DR}(\hat{Y}^-,Y^O)+\underbrace{\sum_{(u,i)\in\mathcal{D}}\frac{\vert{h}^{\text{Est}}_B(\hat{p}_{u,i},p_{u,i},\alpha_{u,i})\delta^-_{u,i}\vert}{\vert\mathcal{D}\vert}}_\text{Bias Term}+\underbrace{h_G^{\text{Est}}({\delta}^+_{u,i})}_\text{Variance Term},\nonumber
\end{aligned}
\end{equation}
\end{small}
where ${e}^+_{u,i}$ and ${\delta}^+_{u,i}$ are the error and error deviation corresponding to $\hat{Y}^+=\arg\max_{\hat{Y}\in\mathcal{H}}\Big\{\sum_{(u,i)\in\mathcal{D}}\Big(\frac{e_{u,i}}{f^{\alpha_{u,i}}(\hat{p}_{u,i})}\Big)^2\Big\}$ and $\hat{Y}^+=\arg\max_{\hat{Y}\in\mathcal{H}}\Big\{\sum_{(u,i)\in\mathcal{D}}\Big(\frac{\delta_{u,i}}{f^{\alpha_{u,i}}(\hat{p}_{u,i})}\Big)^2\Big\}$, respectively, and the function $h_G^{\text{Est}}$ is formulated as
\begin{small}
\begin{equation}
\begin{aligned}
h_G^{\text{Est}}(z^+_{u,i})=\sqrt{\frac{\log(\frac{2\vert\mathcal{H}\vert}{\rho})}{2\vert\mathcal{D}\vert^2}\sum_{(u,i)\in\mathcal{D}}\Big(\frac{z^+_{u,i}}{f^{\alpha_{u,i}}(\hat{p}_{u,i})}\Big)^2}
\end{aligned}
\end{equation}
\end{small}
\end{theorem}
From Theorem \ref{theo_0306}, the bias-variance joint optimization is actually to minimize generalization bounds, which include both the bias term and the variance term. Besides, the dynamic estimators with the optimal parameter $\alpha^\text{opt}_{u,i}$ make variances and generalization bounds bounded. The formal result is given in Theorem \ref{Theo4.12} (The corresponding proofs and bounds of variances are given in Appendix \ref{Proofs}).
\begin{theorem}[Boundedness of Variance and Generalization Bound]
\label{Theo4.12}
Let $\alpha^{\text{opt}}_{u,i}\in[0,1]$ be the optimal parameter of (\ref{Est_h_OPT_problem}). If the dynamic estimators adopt $\alpha^{\text{opt}}_{u,i}$ as the parameter, then the corresponding variance and generalization bounds are bounded.
\end{theorem}
\section{Experiments}
\label{Experiments}
In this section, we conduct extensive experiments to compare the performance of the present dynamic learning framework with existing SOTA approaches and to answer the following questions: \textbf{Q1}: Does the developed dynamic learning framework improve the prediction performance compared with the SOTA approaches?
\textbf{Q2}: Do the present dynamic estimator designs reduce the variance and make performance more stable compared with the SOTA approaches?
\textbf{Q3}: How do the performance and variance of the proposed method change under different optimization weights and estimator functions?
\subsection{Experimental Setup}
\paragraph{Dataset and Preprocessing.} According to the previous works \cite{schnabel2016recommendations,marlin2009collaborative,wang2019doubly,li2023propensity,guo2021enhanced}, two real-world datasets with MNAR and MAR samples are used to conduct the experiments, namely \textsc{Coat} with 4,640 MAR and 6,960 MNAR ratings of 290 users to 300 coats, \textsc{Yahoo! R3} with 54,000 MAR and 311,704 MNAR ratings of 15,400 users to 1,000 songs. Similar to literature \cite{li2023propensity,dai2022generalized,guo2021enhanced}, the rating scores in \textsc{Coat} and \textsc{Yahoo! R3} are binarized as 1 when it is greater than three, otherwise as 0.
\paragraph{Baselines and Experimental Details.} To avoid the uncertainty caused by the prediction and observe the performance of the prediction model, we take the matrix factorization (MF) \cite{koren2009matrix} as the base model and compare the present dynamic learning framework with the following representative IPS-based and DR-based approaches: naive \textbf{MF} \cite{koren2009matrix}, \textbf{IPS} \cite{schnabel2016recommendations}, \textbf{SNIPS} \cite{swaminathan2015self}, \textbf{IPS-AT} \cite{saito2020asymmetric}, \textbf{CVIB} \cite{wang2020information}, 
\textbf{IPS-V2} \cite{li2023propensity}, \textbf{DR} \cite{wang2019doubly}, \textbf{DR-JL} \cite{wang2019doubly}, \textbf{MRDR-JL} \cite{guo2021enhanced}, \textbf{Stable DR} \cite{li2022stabilized}, \textbf{Stable MRDR} \cite{li2022stabilized}, \textbf{TDR-CL} \cite{li2022tdr}, \textbf{TMRDR-CL} \cite{li2022tdr}, \textbf{DR-V2} \cite{li2023propensity}. Here, we adopt the two common metrics used in recommender system, i.e., area under the ROC curve (AUC), and normalized discounted cumulative gain (NDCG@5), to evaluate the performance of prediction models. To guarantee the fair comparison, we set the same parameters for all approaches. The learning rates are tuned in $\{0.001, 0.005, 0.01, 0.05\}$ and weight decay is tuned in $\{1,1e-1,1e-2,1e-3,1e-4,1e-5,1e-6\}$. Note that, for \textit{XX} and \textit{D-XX} approaches, their model structures and parameters are identical. Every approach is preformed 10 times to record its mean and standard deviation. 
\subsection{Performance Comparison (for Q1 and Q2)}
The MNAR records in datasets are used to train the prediction model and the MAR data is employed to evaluate the present dynamic learning approaches and the existing SOTA approaches. The function in proposed \textit{D-XX} approaches is select as $\big(\frac{\log(\hat{p}_{u,i}+1)}{\log(2)}\big)^\alpha$. The performance of other functions provided in Table \ref{Tab_0301} are discussed in subsection \ref{AblationStudies}. The performances of the developed dynamic estimators and the SOTA approaches are shown in Table \ref{Tab_0401}, where $\text{Gain}_\text{AUC}=(\text{AUC}_\text{XX}-\text{AUC}_\text{D-XX})/\text{AUC}_\text{XX}$ and $\text{Gain}_\text{N}=(\text{NDCG}_\text{XX}-\text{NDCG}_\text{D-XX})/\text{NDCG}_\text{XX}$, e.g. $\text{Gain}_\text{AUC}=(\text{AUC}_\text{IPS}-\text{AUC}_\text{D-IPS})/\text{AUC}_\text{IPS}$, $\text{Gain}_\text{N}=(\text{NDCG}_\text{IPS}-\text{NDCG}_\text{D-IPS})/\text{NDCG}_\text{IPS}$. The weights in bias-variance joint optimization are set as $w_1=1$ and $w_2=0.1$. 
For all metrics and datasets, the performances of IPS, IPS-AT, CVIB, IPS-V2, DR, DR-JL, MRDR, DR-V2 outperform the naive method while the naive approach has smaller variance, which implies that unbiased estimators possess high variance. Besides, it can be observed that estimators with the dynamic learning mechanism greatly improve the performances and reduce the variances of various debiased approaches, such as IPS and D-IPS, DR and D-DR, DR-JL and D-DR-JL. Meanwhile, for SNIPS, MRDR-JL, the variance reduction of dynamic estimators do not seem obvious. One possible reason is that these approaches themselves can effectively reduce the variances of estimators by sacrificing unbiasedness. These experiment results further verify Theorem \ref{theo_4.5}.
\begin{table}[ht]
\caption{Performances of the proposed method and baselines 
(mean $\pm$ standard deviation across 10 runs).}
\label{Tab_0401}
\centering
\resizebox{\linewidth}{!}{
\begin{tabular}{l|cccc|cccc}
\toprule
 &\multicolumn{4}{c|}{Coat} &\multicolumn{4}{c}{Yahoo! R3} \\
\midrule
Methods &AUC &$\text{Gain}_\text{AUC}$ &NDCG@5 &$\text{Gain}_\text{N}$ &AUC &$\text{Gain}_\text{AUC}$ &NDCG@5 &$\text{Gain}_\text{N}$\\
\midrule
naive  &0.7429$\pm$0.0046 &-- &0.6173$\pm$0.0065 &-- &0.6619$\pm$0.0011  &-- &0.6798$\pm$0.0019 &-- \\
IPS     &0.7539$\pm$0.0058 &-- &0.6496$\pm$0.0093 &--
&0.6624$\pm$0.0025 &-- &0.6583$\pm$0.0020 &-- \\
SNIPS  &0.7423$\pm$0.0038 &-- &0.6110$\pm$0.0056 &--
&0.6727$\pm$0.0022 &-- &0.6647$\pm$0.0019 &-- \\
IPS-AT  &0.7692$\pm$0.0023 &-- &0.6290$\pm$0.0069  &--
&0.6570$\pm$0.0072 &-- &0.6720$\pm$0.0019 &-- \\
CVIB  &0.7448$\pm$0.0032 &-- &0.6123$\pm$0.0082 &--  &0.6119$\pm$0.0020 &-- &0.6766$\pm$0.0017 &-- \\
IPS-V2  &0.7737$\pm$0.0024 &-- &0.6514$\pm$0.0056 &-- &0.6656$\pm$0.0022  &-- &0.6434$\pm$0.0026 &-- \\
\textbf{D-IPS (Ours)} 	&0.7777$\pm$0.0015 &3.16$\%$	&0.6584$\pm$0.0049  &1.35$\%$ &0.6767$\pm$0.0024 &2.16$\%$ &0.6630$\pm$0.0027 &0.71$\%$\\
\textbf{D-SNIPS (Ours)}	&0.7429$\pm$0.0036 &0.08$\%$	&0.6096$\pm$0.0062  &-0.23$\%$ &\textbf{0.7018}$\pm$0.0012 &4.33$\%$ &0.6899$\pm$0.0025 &3.79$\%$ \\
\textbf{D-IPS-AT (Ours)} &0.7705$\pm$0.0012 &0.17$\%$	&0.6367$\pm$0.0052 &1.22$\%$ &0.6913$\pm$0.0029 &4.33$\%$ &0.6769$\pm$0.0047 &0.73$\%$ \\
\midrule
DR      &0.7538$\pm$0.0032 &-- &0.6425$\pm$0.0096 &-- &0.6863$\pm$0.0013 &-- &0.6738$\pm$0.0033 &-- \\
DR-JL     &0.7574$\pm$0.0046 &-- &0.6496$\pm$0.0141 &--  &0.6853$\pm$0.0012 &-- &0.6707$\pm$0.0019 &-- \\
MRDR-JL  &0.7590$\pm$0.0031 &-- &0.6502$\pm$0.0074 &--  &0.6851$\pm$0.0017 &-- &0.6708$\pm$0.0024 &--\\
Stable DR     &0.7648$\pm$0.0013 &-- &0.6315$\pm$0.0040  &--&0.6925$\pm$0.0019 &--&0.6749$\pm$0.0023  &--
\\
Stable MRDR     &0.7645$\pm$ \textbf{\textcolor{blue}{0.0009}} &--&0.6318$\pm$\textbf{\textcolor{blue}{0.0025}} &--&0.6929$\pm$0.0016 &--&0.6753$\pm$0.0011 &--
\\
TDR-CL     &0.7639$\pm$0.0032 &-- &0.6541$\pm$0.0102  &--&0.6797$\pm$\textbf{\textcolor{blue}{0.0006}} &-- & 0.6842$\pm$0.0011 &--\\
TMRDR-CL    &0.7690$\pm$0.0016 &-- &0.6363$\pm$0.0041 &--  &0.6597$\pm$0.0016 &-- &0.6877$\pm$0.0009 &--\\
DR-V2   &0.7749$\pm$0.0024 &-- &0.6625$\pm$0.0092 &--   &0.6846$\pm$0.0043 &--  &0.6613$\pm$0.0054 &-- \\
\textbf{D-DR (Ours)}	&\textbf{0.7804}$\pm$0.0023	&3.53$\%$&\textbf{0.6671}$\pm$0.0051 &3.83$\%$  &0.6999$\pm$0.0026 &1.98$\%$&\textbf{0.7043}$\pm$0.0042 &4.53$\%$ \\
\textbf{D-DR-JL (Ours)} 	&0.7775$\pm$0.0016 &2.65$\%$	&0.6577$\pm$0.0036   &1.25$\%$ &0.6913$\pm$0.0014 &0.88$\%$ &0.6721$\pm$0.0028 &0.21$\%$\\
\textbf{D-MRDR-JL (Ours)} 	&0.7786$\pm$0.0025 &2.58$\%$	&0.6616$\pm$0.0044  &1.75$\%$ &0.6917$\pm$0.0027 &0.96$\%$ &0.6735$\pm$0.0038 &0.40$\%$\\
\bottomrule
\end{tabular}
}
\end{table}
\subsection{Ablation Studies (for Q3)}
\label{AblationStudies}
\paragraph{Effects of Different Weights and Functions in Dynamic Estimators.}
\begin{figure}[b]
\begin{center}
\centerline{\includegraphics[width=\columnwidth]{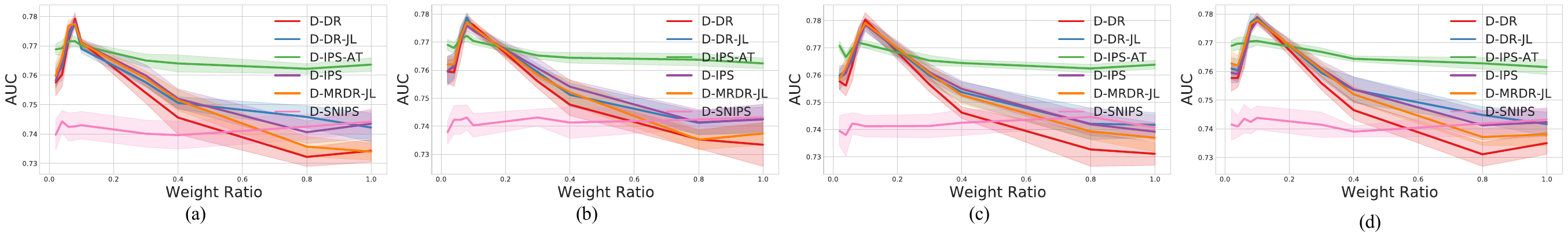}}
\caption{Effects of different weight ratios $\frac{w_2}{w_1}$ on performances of dynamic estimators under different functions $f^\alpha(\hat{p}_{u,i})$: 
(a) $\hat{p}^\alpha_{u,i}$; (b) $\Big(\frac{\sin(\hat{p}_{u,i})}{\sin(1)}\Big)^\alpha$; (c) $\Big(\frac{\log(\hat{p}_{u,i}+1)}{\log(2)}\Big)^\alpha$; (d) $\Big(\frac{\tanh(\hat{p}_{u,i})}{\tanh(1)}\Big)^\alpha$.}
\label{Fig4_DiffrentWeightsUnderDifferentFunctions}
\end{center}
\end{figure}
In subsection \ref{DynamicEstimators}, we provide four specific dynamic estimators given in Table \ref{Tab_0301}. We set the weights in bias-variance joint optimization (\ref{Est_h_OPT_problem}) as $w_1=1$ and $w_2=[0.02, 0.04, 0.06, 0.08, 1]$ for these four dynamic estimators to investigate the effects of weights on the performances and variances. From Eq. (\ref{optimal_condition}), the optimal parameter $\alpha_\text{opt}$ is determined by the ratio $\frac{w_2}{w_1}$ and $\alpha_\text{opt}$ determines the objective function. Therefore, we just focus on the effects of the weight ratios on the performance and variances of estimators, which are given in Fig. \ref{Fig4_DiffrentWeightsUnderDifferentFunctions}. It can be observed that, for D-IPS, D-IPS-AT, D-DR, D-DR-JL, and D-MRDR-JL approaches, the performances increased at first and then decreased as the number of the weight ratio increases. Meanwhile, the variances seem to achieve their minimums when the performances achieve their highest values. Since the smaller the weight ratio is, the smaller the bias of the dynamic estimator is, the experimental results given in Fig. \ref{Fig4_DiffrentWeightsUnderDifferentFunctions} reveal that the unbiasedness of estimators is not exactly equivalent to the performances of estimators. Actually, from the generalization bounds given in Theorem \ref{theo_0306}, the bias-variance joint optimization enable estimators to minimize the generalization bounds and then further improve the generalization performance. Meanwhile, we find that the variances of dynamic estimators is not decreasing when the weight ratio increases. This because, for different ratios, the global minimum of the objective function (\ref{Est_h_OPT_problem}) cannot be reached within the interval $\alpha\in[0,1]$. For SNIPS, the property of variance reduction might lead to the non-obvious performance and variance trends.

Under the identical weight ratio $\frac{w_2}{w_1}=0.1$, we further discuss the effects of different functions $f^\alpha(\hat{p}_{u,i})$ on the prediction performance and variance. The experimental results are shown in Fig. \ref{Tab_0402}. Nearly all dynamic estimators with different function expressions outperform the corresponding debiased approaches given in Table \ref{Tab_0402}. It further demonstrates that the proposed dynamic learning mechanism can greatly improve the performance of the original estimator. Besides, the prediction performance of the dynamic estimator with $f^\alpha(\hat{p}_{u,i})=\Big(\frac{\log(\hat{p}_{u,i}+1)}{\log(2)}\Big)^\alpha$ outperforms other dynamic estimators.
\begin{table}
\caption{Effects of different functions on performances of dynamic estimators.}
\label{Tab_0402}
\begin{center}
\resizebox{\linewidth}{!}{
\begin{tabular}{l|cccc|cccc}
\toprule
$f^\alpha(\hat{p}_{u,i})$ 
&\multicolumn{4}{c|}{$\hat{p}^\alpha_{u,i}$}   
&\multicolumn{4}{c}{$\Big(\frac{\sin(\hat{p}_{u,i})}{\sin(1)}\Big)^\alpha$}\\
\midrule
Methods  &AUC &$\text{Gain}_\text{AUC}$ &NDCG@5 &$\text{Gain}_\text{N}$ &AUC &$\text{Gain}_\text{AUC}$ &NDCG@5 &$\text{Gain}_\text{N}$\\
\midrule
D-IPS	&0.7702$\pm$\textbf{\textcolor{blue}{0.0011}}	& 2.16$\%$ &0.6362$\pm$\textbf{\textcolor{blue}{0.0043}} & -2.06$\%$	&0.7753$\pm$0.0017 & 2.84$\%$	&0.6475$\pm$0.0043 & -0.32$\%$\\
D-SNIPS	&0.7413$\pm$0.0045 & -0.13$\%$	&\textbf{0.6146}$\pm$0.0079	& 0.59$\%$ &0.7392$\pm$0.0038 & -0.42$\%$	&0.6109$\pm$0.0089	& -0.02$\%$ \\
D-IPS-AT	&0.7711$\pm$0.0016 & 0.25$\%$	&0.6360$\pm$0.0051	& 1.11$\%$ &0.7710$\pm$0.0022 & 0.23$\%$	&0.6346$\pm$0.0036	& 0.89$\%$ \\
D-DR		&0.7710$\pm$\textbf{\textcolor{blue}{0.0014}}	& 2.28$\%$ &0.6384$\pm$\textbf{\textcolor{blue}{0.0047}} & -0.64$\%$		&0.7763$\pm$0.0021 & 2.98$\%$	&0.6516$\pm$0.0052	& 1.42$\%$ \\
D-DR-JL	&0.7695$\pm$0.0013 & 1.60$\%$	&0.6346$\pm$0.0058	& -2.31$\%$	&0.7748$\pm$0.0012 & 2.30$\%$	&0.6444$\pm$0.0053	& -0.80$\%$\\
D-MRDR-JL		&0.7711$\pm$0.0016 & 1.59$\%$	&0.6365$\pm$0.0040	& -2.11$\%$	&0.7751$\pm$\textbf{\textcolor{blue}{0.0012}}	& 2.12$\%$ &0.6470$\pm$\textbf{\textcolor{blue}{0.0038}}	& -0.49$\%$	\\
\midrule
\midrule
$f^\alpha(\hat{p}_{u,i})$ 
&\multicolumn{4}{c|}{$\Big(\frac{\log(\hat{p}_{u,i}+1)}{\log(2)}\Big)^\alpha$}
&\multicolumn{4}{c}{$\Big(\frac{\tanh(\hat{p}_{u,i})}{\tanh(1)}\Big)^\alpha$}\\
\midrule
Methods  &AUC &$\text{Gain}_\text{AUC}$ &NDCG@5 &$\text{Gain}_\text{N}$ &AUC &$\text{Gain}_\text{AUC}$ &NDCG@5 &$\text{Gain}_\text{N}$\\
\midrule
D-IPS	&\textbf{0.7777}$\pm$0.0015	& 3.16$\%$ &\textbf{0.6584}$\pm$0.0049	& 1.35$\%$ &0.7771$\pm$0.0016	& 3.08$\%$ &0.6578$\pm$0.0048 & 1.26$\%$\\
D-SNIPS	&\textbf{0.7429}$\pm$\textbf{\textcolor{blue}{0.0036}}	& 0.08$\%$ &0.6096$\pm$\textbf{\textcolor{blue}{0.0062}} & -0.23$\%$ &0.7418$\pm$0.0070	& -0.07$\%$ &0.6115$\pm$0.0082 & 0.08$\%$\\
D-IPS-AT	&0.7705$\pm$0.0012 & 0.17$\%$ &\textbf{0.6367}$\pm$0.0052 & 1.22$\%$
	&\textbf{0.7718}$\pm$\textbf{\textcolor{blue}{0.0011}}	& 0.34$\%$ &0.6357$\pm$\textbf{\textcolor{blue}{0.0029}} & 1.07$\%$\\
D-DR		&\textbf{0.7804}$\pm$0.0023	& 3.53$\%$ &\textbf{0.6671}$\pm$0.0051 & 3.83$\%$ 		&0.7792$\pm$0.0019 & 3.37$\%$ 	&0.6608$\pm$0.0053 & 2.85$\%$\\
D-DR-JL	&0.7775$\pm$0.0016 & 2.65$\%$	&\textbf{0.6577}$\pm$\textbf{\textcolor{blue}{0.0036}}	& 1.25$\%$	&\textbf{0.7782}$\pm$\textbf{\textcolor{blue}{0.0011}} & 2.75$\%$	&0.6537$\pm$0.0039 & 0.63$\%$\\
D-MRDR-JL		&\textbf{0.7786}$\pm$0.0025 & 2.58$\%$	&\textbf{0.6616}$\pm$0.0044	& 1.75$\%$ &0.7779$\pm$0.0017 & 2.49$\%$	&0.6576$\pm$0.0071 & 1.14$\%$\\
\bottomrule
\end{tabular}
}
\end{center}
\end{table}

\section{Related Work}
\label{RelatedWork}
Aiming at the prediction model bias caused by the MNAR data, EIB \cite{steck2010training} and IPS \cite{schnabel2016recommendations} approaches are two classical unbiased estimators. To leverage the advantages of EIB and IPS, the DR method \cite{wang2019doubly} was designed to make the unbiasedness of estimator doubly robust. Focusing on the unbiasedness of estimators, various estimation methods have been proposed to overcome mixed or even unknown biases in the data \cite{chen2021autodebias}, to solve the sample selection bias problem in the multi-task learning \cite{zhang2020large,xu2023rethinking,xu2023neural,xu2023towards}, to improve the performance of the propensity model by different approaches \cite{saito2020asymmetric,ma2019missing}, and so forth. For the variance of estimators, an increasing body of works have emerged to reduce the variance. The most common estimator reducing variance is Self-Normalized IPS (SNIPS) \cite{swaminathan2015self}. Based on DR, literature \cite{guo2021enhanced} designed a MRDR estimator to reduce the variance of the DR estimator by the present variance expression of DR. In \cite{li2022tdr}, TDR estimator is elaborated to reduce the bias and variance of DR simultaneously by the present semi-parametric collaborative learning. Moreover, stable DR estimator \cite{li2022stabilized} achieves the bounded bias, variance, and generalization error bound simultaneously for arbitrarily small propensities by combining SNIPS and DR methods. Various regularization designs, such as SV \cite{wang2021combating}, MIS \cite{wang2021combating}, BMSE \cite{li2023propensity}, and so forth, are also introduced into the estimator to achieve variance reduction.

\section{Conclusions}
\label{Conclusions}
To the best of our knowledge, this is the first work to reveal that the essence of estimator designs is not merely to eliminate bias, to reduce variance, or to achieve a simple bias-variance trade-off but to quantitatively and simultaneously optimize bias and variance. Besides, the limitations of general regularization techniques and general static estimators are presented. Based on the general laws with respect to the relationship between bias and variance, we propose a systematic dynamic learning framework, which guarantees the bounded variances and  generalization bounds by the present fine-grained bias-variance joint optimization scheme. Extensive experiment results have verified the theoretical results and the performance of the present dynamic estimators. 
The search for optimal weights in the objective function and the functions in the dynamic estimation framework remains an open question.

\bibliographystyle{unsrt}  
\bibliography{references}  

\appendix
\section{Derivation of bias and variance for naive, EIB, IPS, and DR estimators.}
\label{BiasVariance}
\setcounter{subsection}{0}
As mentioned in literature \cite{wang2019doubly}, the bias of an estimator is defined as 
\begin{small}
\begin{equation}
\begin{aligned}
    \label{Eq_Appendix_A01}
    \text{Bias}(L)=\vert{L}_\text{real}-\mathbb{E}_{O}[L]\vert
\end{aligned}
\end{equation}
\end{small}
According to the prediction inaccuracy expressions of $L_\text{naive}$ and the definition of bias (\ref{Eq_Appendix_A01}), the bias of naive estimator satisfies
\begin{small}
\begin{equation}
\begin{aligned}
\text{Bias}(L_\text{naive})=\frac{1}{\vert\mathcal{D}\vert}\bigg\vert\sum\limits_{(u,i)\in\mathcal{D}}e_{u,i}-\vert\mathcal{D}\vert\mathbb{E}_{O}[L_\text{naive}]\bigg\vert
=\frac{1}{\vert\mathcal{D}\vert}\bigg\vert\sum\limits_{(u,i)\in\mathcal{D}}\bigg(1-\frac{\vert\mathcal{D}\vert}{\vert\mathcal{O}\vert}p_{u,i}\bigg)e_{u,i}\bigg\vert.
\end{aligned}
\end{equation}
\end{small}
According to the definition of variance for an estimation given in \cite{guo2021enhanced}, the variance of the naive approach can be formulated as
\begin{small}
\begin{equation}
\begin{aligned}
\mathbb{V}_O[L_\text{naive}]=&\mathbb{E}_O[L^2_\text{naive}]-\mathbb{E}^2_O[L_\text{naive}]\nonumber\\
=&\frac{1}{\vert\mathcal{O}\vert^2}\mathbb{E}_O\bigg[\bigg(\sum_{(u,i)\in\mathcal{D}}o_{u,i}e_{u,i}\bigg)^2\bigg]-\frac{1}{\vert\mathcal{O}\vert^2}\Big(\sum_{(u,i)\in\mathcal{D}}p_{u,i}e_{u,i}\bigg)^2\nonumber\\
=&\frac{1}{\vert\mathcal{O}\vert^2}\sum_{(u,i)\in\mathcal{D}}p_{u,i}(1-p_{u,i})e_{u,i}^2
\end{aligned}
\end{equation}
\end{small}
The biases of EIB, IPS and DR methods have been given in Lemma 3.1 of \cite{wang2019doubly}. The variance formulations of IPS and DR estimators have been provided in \cite{guo2021enhanced}. For the variance of EIB, we obtain
\begin{small}
\begin{equation}
\begin{aligned}
\mathbb{V}_O[L_\text{EIB}]=&\mathbb{E}_O[L^2_\text{EIB}]-\mathbb{E}^2_O[L_\text{EIB}]\nonumber\\
=&\frac{1}{\vert\mathcal{D}\vert^2}\mathbb{E}_O\bigg[\bigg(\sum_{(u,i)\in\mathcal{D}}[o_{u,i}e_{u,i}+(1-o_{u,i})\hat{e}_{u,i}]\bigg)^2\bigg]-\frac{1}{\vert\mathcal{D}\vert^2}\Big(\sum_{(u,i)\in\mathcal{D}}[p_{u,i}e_{u,i}+(1-p_{u,i})\hat{e}_{u,i}]\bigg)^2\nonumber\\
=&\frac{1}{\vert\mathcal{D}\vert^2}\sum_{(u,i)\in\mathcal{D}}{p_{u,i}(1-p_{u,i})}(e_{u,i}-\hat{e}_{u,i})^2.
\end{aligned}
\end{equation}
\end{small}
Therefore, the bias and variance of naive, EIB, IPS, and DR estimators shown in Table \ref{Tab_0201} can be obtained.
\begin{table}[b]
\caption{Bias and variance of naive, EIB, IPS, and DR estimators.}
\label{Tab_0201}
\centering
\resizebox{\linewidth}{!}{
\begin{tabular}{l|cccc}
\toprule
Model & naive & EIB & IPS & DR  \\
\midrule
Bias   &$\frac{1}{\vert\mathcal{O}\vert}\bigg\vert\sum\limits_{(u,i)\in\mathcal{D}}\Big(1-p_{u,i}\Big)e_{u,i}\bigg\vert$ 
&$\frac{1}{\vert\mathcal{D}\vert}\bigg\vert\sum\limits_{(u,i)\in\mathcal{D}}(1-p_{u,i})\delta_{u,i}\bigg\vert$ 
&$\frac{1}{\vert\mathcal{D}\vert}\bigg\vert\sum\limits_{(u,i)\in\mathcal{D}}\Delta_{u,i}e_{u,i}\bigg\vert$ 
&$\frac{1}{\vert\mathcal{D}\vert}\bigg\vert\sum\limits_{(u,i)\in\mathcal{D}}\Delta_{u,i}\delta_{u,i}\bigg\vert$ \\
Variance 
&$\frac{1}{\vert\mathcal{O}\vert^2}\sum\limits_{(u,i)\in\mathcal{D}}{p_{u,i}(1-p_{u,i})}e_{u,i}^2$ 
&$\frac{1}{\vert\mathcal{D}\vert^2}\sum\limits_{(u,i)\in\mathcal{D}}{p_{u,i}(1-p_{u,i})}\delta_{u,i}^2$ 
&$\frac{1}{\vert\mathcal{D}\vert^2}\sum\limits_{(u,i)\in\mathcal{D}}\frac{p_{u,i}(1-p_{u,i})}{\hat{p}_{u,i}^2}e_{u,i}^2$ 
&$\frac{1}{\vert\mathcal{D}\vert^2}\sum\limits_{(u,i)\in\mathcal{D}}\frac{p_{u,i}(1-p_{u,i})}{\hat{p}_{u,i}^2}\delta_{u,i}^2$\\
\bottomrule
\end{tabular}
}
\end{table} 

\section{Unbounded Variance}
\label{UnboundedVariance}
According to the definitions of variances of IPS and SR methods, if $\hat{p}_{u,i}=p_{u,i}$, when the propensity tends to zero, the variace of IPS and DR satisfy
\begin{small}
\begin{align}
\lim_{p_{u,i}\to0}\mathbb{V}_O[L_\text{IPS}\vert{\hat{p}_{u,i}=p_{u,i}}]=&\lim_{p_{u,i}\to0}\frac{1}{\vert\mathcal{D}\vert^2}\sum\limits_{(u,i)\in\mathcal{D}}\frac{1-p_{u,i}}{{p}_{u,i}}e_{u,i}^2=\infty,\nonumber\\
\lim_{p_{u,i}\to0}\mathbb{V}_O[L_\text{DR}\vert{\hat{p}_{u,i}=p_{u,i}}]=&\lim_{p_{u,i}\to0}\frac{1}{\vert\mathcal{D}\vert^2}\sum\limits_{(u,i)\in\mathcal{D}}\frac{1-p_{u,i}}{{p}_{u,i}}\delta_{u,i}^2=\infty.\nonumber
\end{align}
\end{small}
This demonstrates that the variances of IPS and DR are unbounded even if the propensities are accurate. If there exists one propensity going to zero then the variance tends to infinity

\section{Limitations of Regularization Techniques and Static Estimators}
\label{limit_Reg_Sta_Est}
\textbf{Theorem 3.1.}
Let $L_\text{Est+Reg}$ be defined in (\ref{Eq_3.1_01}) and the estimator $L_\text{Est}$ be unbiased.  
If $L_\text{Est+Reg}$ is unbiased, then the variance of $L_\text{Est+Reg}$ is greater than the one of the original estimator $L_\text{Est}$.
\begin{proof}
    The variance of $L_\text{Est+Reg}$ satisfies 
\begin{small}
\begin{equation}
\begin{aligned}
\mathbb{V}_O[L_\text{Est+Reg}]=\mathbb{V}_O[L_\text{Est}]+2\lambda\text{Cov}(L_\text{Est}, L_\text{Reg})+\lambda^2\mathbb{V}_O[L_\text{Reg}].\nonumber
\end{aligned}
\end{equation}
\end{small}
To reduce the variance of $L_\text{Est}$, $\mathbb{V}_O[L_\text{Est+Reg}]$ satisfies $\mathbb{V}_O[L_\text{Est+Reg}]\leq\mathbb{V}_O[L_\text{Est}]$, which implies that
\begin{small}
\begin{equation}
\begin{aligned}
2\lambda\text{Cov}(L_\text{Est}, L_\text{Reg})+\lambda^2\mathbb{V}_O[L_\text{Reg}]\leq0.
\end{aligned}
\end{equation}
\end{small}
Therefore, the parameter $\lambda$ satisfies $0\leq\lambda\leq-\frac{2\text{Cov}(L_\text{Est}, L_\text{Reg})}{\mathbb{V}_O[L_\text{Reg}]}$ and the optimal parameter is $\lambda_\text{opt}=-\frac{\text{Cov}(L_\text{Est}, L_\text{Reg})}{\mathbb{V}_O[L_\text{Reg}]}$. On the other hand, since $L_\text{Est}$ and $L_\text{Est+Reg}$ are unbiased, we obtain $\mathbb{E}_O[{L}_{\text{Reg}}]=0$. Then $\text{Cov}(L_\text{Est}, L_\text{Reg})$ satisfies
\begin{small}
\begin{equation}
    \begin{aligned}
        \text{Cov}(L_\text{Est}, L_\text{Reg})=&\mathbb{E}_O(L_\text{Est}L_\text{Reg})-\mathbb{E}_O(L_\text{Est})\mathbb{E}_O(L_\text{Reg})\nonumber\\
        =&\mathbb{E}_O(L_\text{Est}L_\text{Reg})\nonumber\\
        =&\frac{1}{\vert\mathcal{D}\vert^2}\mathbb{E}_O\Bigg(\bigg[\sum_{(u,i)\in\mathcal{D}}\Big(f(o_{u,i},\hat{p}_{u,i})e_{u,i}+g(o_{u,i},\hat{p}_{u,i})\hat{e}_{u,i}\Big)\bigg]\bigg[\sum_{(u,i)\in\mathcal{D}}h(o_{u,i},\hat{p}_{u,i})\bigg]\Bigg)\nonumber\\
        \ge&0,
    \end{aligned}
\end{equation}
\end{small}
which implies that there is no $\lambda$ for which $\mathbb{V}_O[L_\text{Est+Reg}]<\mathbb{V}_O[L_\text{Est}]$ holds true when $L_\text{Est}$ and $L_\text{Est+Reg}$ are unbiased.
\end{proof}
\textbf{Theorem 3.3.}
Let the bias of $L_\text{Est+Reg}$ be bounded and the variance of $L_\text{Est}$ satisfy $\lim_{p_{u,i}\to0}\mathbb{V}_O[L_\text{Est}\vert\hat{p}_{u,i}=p_{u,i}]=\infty$. 
Then, there exists no regularizer ${L}_{\text{Reg}}$ that enables the variance and generalization bound of the estimator bounded even the learned imputed errors or propensities are accurate.
\begin{proof}
    According to the definition of variance, $\mathbb{V}_O[L_\text{Est+Reg}]$ satisfies
\begin{small}
\begin{equation}
\label{Eq_Appendix_C_01}
\begin{aligned}
\mathbb{V}_O[L_\text{Est+Reg}]=&\mathbb{E}_O[(L_\text{Est}+\lambda{L}_\text{Reg})^2]-\mathbb{E}^2_O[L_\text{Est}+\lambda{L}_\text{Reg}]\\
=&\mathbb{E}_O[L^2_\text{Est}+\lambda{L}^2_\text{Reg}+2L_\text{Est}L_\text{Reg}]-\mathbb{E}^2_O[L_\text{Est}+\lambda{L}_\text{Reg}].
\end{aligned}
\end{equation}
\end{small}
Since the bias of the estimator $L_\text{Est+Reg}$ is bounded and $\lim_{p_{u,i}\to0}\mathbb{V}_O[L_\text{Est}\vert\hat{p}_{u,i}=p_{u,i}]=\infty$, $\mathbb{E}^2_O[L_\text{Est}+\lambda{L}_\text{Reg}]$ is also bounded, i.e. $\mathbb{E}^2_O[L_\text{Est}+\lambda{L}_\text{Reg}]\leq\bar{B}$. Eq. (\ref{Eq_Appendix_C_01}) satisfies
\begin{small}
\begin{equation}
\begin{aligned}
\mathbb{V}_O[L_\text{Est+Reg}]
=&\mathbb{E}_O[L^2_\text{Est}]+\lambda\mathbb{E}_O[{L}^2_\text{Reg}]+2\mathbb{E}_O[L_\text{Est}L_\text{Reg}]-\mathbb{E}^2_O[L_\text{Est}+\lambda{L}_\text{Reg}]\\
\ge&\mathbb{E}_O[L^2_\text{Est}]+\lambda\mathbb{E}_O[{L}^2_\text{Reg}]+2\mathbb{E}_O[L_\text{Est}L_\text{Reg}]-\bar{B}^2,\nonumber
\end{aligned}
\end{equation}
\end{small}
which implies that $\lim_{p_{u,i}\to0}\mathbb{V}_O[L_\text{Est+Reg}\vert\hat{p}_{u,i}=p_{u,i}]=\infty$.
\end{proof}
\textbf{Theorem 3.4.} (Limitation of Static Estimator).
Given prediction errors $e_{u,i}$, imputed errors $\hat{e}_{u,i}$, and learned propensities $\hat{p}_{u,i}$ for all user-item pairs $(u,i)$, if for any $e_{u,i}-g(0,\hat{p}_{u,i})\hat{e}_{u,i}\ne0$, $L_\text{Est}$ given in (\ref{Eq_3.1_01}) is unbiased, then the corresponding variance and generalization bound are unbounded.
\begin{proof}
According to the formulation of the estimator (\ref{Eq_3.1_01}) and $f(0,\hat{p}_{u,i})=0$, its bias is given as
\begin{small}
\begin{equation}
\label{Theo5_02}
\begin{aligned}
\text{Bias}(L)=&\frac{1}{\vert\mathcal{D}\vert}\bigg\vert\sum\limits_{(u,i)\in\mathcal{D}}e_{u,i}-\mathbb{E}_{O}[L]\bigg\vert\\
=&\frac{1}{\vert\mathcal{D}\vert}\bigg\vert\sum_{(u,i)\in\mathcal{D}}\Big[\big(1-f(1,\hat{p}_{u,i})p_{u,i}\big)e_{u,i}-\big(g(1,\hat{p}_{u,i})p_{u,i}+g(0,\hat{p}_{u,i})(1-p_{u,i})\big)\hat{e}_{u,i}\Big]\bigg\vert.
\end{aligned}
\end{equation}
\end{small}
From (\ref{Theo5_02}), it can be observed that the unbiasedness of the estimator $L$ implies that
\begin{small}
\begin{equation}
\begin{aligned}
\big[1-f(1,\hat{p}_{u,i})p_{u,i}\big]e_{u,i}-\big[g(1,\hat{p}_{u,i})p_{u,i}+g(0,\hat{p}_{u,i})(1-p_{u,i})\big]\hat{e}_{u,i}=0,\nonumber
\end{aligned}
\end{equation}
\end{small}
which is equivalent to
\begin{small}
\begin{equation}
\label{Theo5_03}
\begin{aligned}
f(1,\hat{p}_{u,i})p_{u,i}e_{u,i}+\big[g(1,\hat{p}_{u,i})p_{u,i}+g(0,\hat{p}_{u,i})(1-p_{u,i})\big]\hat{e}_{u,i}=e_{u,i}
\end{aligned}
\end{equation}
\end{small}
and
\begin{small}
\begin{equation}
\label{Theo5_04}
\begin{aligned}
f(1,\hat{p}_{u,i})e_{u,i}+g(1,\hat{p}_{u,i})\hat{e}_{u,i}=\frac{e_{u,i}-g(0,\hat{p}_{u,i})(1-p_{u,i})\hat{e}_{u,i}}{p_{u,i}}
\end{aligned}
\end{equation}
\end{small}
Let us consider the variance of the estimator $L$. It satisfies
\begin{small}
\begin{equation}
\begin{aligned}
\mathbb{V}_O[L]=&\frac{1}{\vert\mathcal{D}\vert^2}\mathbb{E}_O\Bigg[\bigg(\sum_{(u,i)\in\mathcal{D}}\Big(f(o_{u,i},\hat{p}_{u,i})e_{u,i}+g(o_{u,i},\hat{p}_{u,i})\hat{e}_{u,i}\Big)\bigg)^2\Bigg]\\
&-\frac{1}{\vert\mathcal{D}\vert^2}\bigg(\sum_{(u,i)\in\mathcal{D}}\Big[f(1,\hat{p}_{u,i})p_{u,i}e_{u,i}+\big(g(1,\hat{p}_{u,i})p_{u,i}+g(0,\hat{p}_{u,i})(1-p_{u,i})\big)\hat{e}_{u,i}\Big]\bigg)^2\\
=&\frac{1}{\vert\mathcal{D}\vert^2}\sum_{(u,i)\in\mathcal{D}}\bigg[\mathbb{E}_O\Big[\Big(f(o_{u,i},\hat{p}_{u,i})e_{u,i}+g(o_{u,i},\hat{p}_{u,i})\hat{e}_{u,i}\Big)^2\Big]\\
&-\Big[f(1,\hat{p}_{u,i})p_{u,i}e_{u,i}+\big[g(1,\hat{p}_{u,i})p_{u,i}+g(0,\hat{p}_{u,i})(1-p_{u,i})\big]\hat{e}_{u,i}\Big]^2\bigg].
\end{aligned}
\end{equation}
\end{small}
According to (\ref{Theo5_03}) and (\ref{Theo5_04}), the variance can be rewritten as
\begin{small}
\begin{equation}
\begin{aligned}
\mathbb{V}_O[L]=&\frac{1}{\vert\mathcal{D}\vert^2}\sum_{(u,i)\in\mathcal{D}}\bigg[\mathbb{E}_O\Big[\Big(f(o_{u,i},\hat{p}_{u,i})e_{u,i}+g(o_{u,i},\hat{p}_{u,i})\hat{e}_{u,i}\Big)^2\Big]-e^2_{u,i}\bigg]\\
=&\frac{1}{\vert\mathcal{D}\vert^2}\sum_{(u,i)\in\mathcal{D}}\bigg[\Big[f(1,\hat{p}_{u,i})e_{u,i}+g(1,\hat{p}_{u,i})\hat{e}_{u,i}\Big]^2p_{u,i}+g^2(0,\hat{p}_{u,i})\hat{e}^2_{u,i}(1-p_{u,i})-e^2_{u,i}\bigg]\\
=&\frac{1}{\vert\mathcal{D}\vert^2}\sum_{(u,i)\in\mathcal{D}}\frac{1-p_{u,i}}{p_{u,i}}\big[e_{u,i}-g(0,\hat{p}_{u,i})\hat{e}_{u,i}\big]^2.
\end{aligned}
\end{equation}
\end{small}

Therefore, when the propensity tends to zero, for any $e_{u,i}-g(0,\hat{p}_{u,i})\hat{e}_{u,i}\ne0$, the limit of the variance satisfies $\lim_{p_{u,i}\to0}\mathbb{V}_O[L]=\infty$. Since the generalization bound contains the bias and variance terms, the generalization bound is also unbounded when the propensity tends to zero. The proof is completed.
\end{proof}

\section{Proofs of Properties for Fine-Grained Dynamic Estimators}
\label{Propos}
\label{Proofs}
\begin{lemma}[Bias of D-IPS and D-DR]
\label{appendix_lemm_01}
Given prediction errors $e_{u,i}$, imputed errors $\hat{e}_{u,i}$, and learned propensities $\hat{p}_{u,i}$ for all user-item pairs $(u,i)$, the biases of the D-IPS and D-DR methods are given as 
\begin{small}
\begin{equation}
\label{C_Bias_D}
\begin{aligned}
\text{Bias}(L_{\text{D-IPS}})=\frac{1}{\vert\mathcal{D}\vert}\bigg\vert\sum_{(u,i)\in\mathcal{D}}h^{\text{Est}}_B(\hat{p}_{u,i},p_{u,i},\alpha)e_{u,i}\bigg\vert,\;\text{Bias}(L_{\text{D-DR}})=\frac{1}{\vert\mathcal{D}\vert}\bigg\vert\sum_{(u,i)\in\mathcal{D}}h^{\text{Est}}_B(\hat{p}_{u,i},p_{u,i},\alpha)\delta_{u,i}\bigg\vert,
\end{aligned}
\end{equation}
\end{small}
where the function $h_B$ satisfies
$h^{\text{Est}}_B(\hat{p}_{u,i},p_{u,i},\alpha)=1-\frac{p_{u,i}}{f^\alpha(\hat{p}_{u,i})}$.
\end{lemma}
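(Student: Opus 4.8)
The plan is to proceed by direct computation from the bias definition \eqref{Eq_Appendix_A01}, exactly as in the derivations for the naive, EIB, IPS, and DR estimators recorded in Appendix~\ref{BiasVariance}. First I would write down the explicit forms of the two dynamic estimators, obtained from standard IPS and DR by replacing the learned propensity $\hat{p}_{u,i}$ in the inverse-weight by the dynamic weighting $f^\alpha(\hat{p}_{u,i})$. Concretely,
\begin{small}
\begin{equation}
L_{\text{D-IPS}}=\frac{1}{\vert\mathcal{D}\vert}\sum_{(u,i)\in\mathcal{D}}\frac{o_{u,i}}{f^\alpha(\hat{p}_{u,i})}e_{u,i},\qquad
L_{\text{D-DR}}=\frac{1}{\vert\mathcal{D}\vert}\sum_{(u,i)\in\mathcal{D}}\Big[\hat{e}_{u,i}+\frac{o_{u,i}}{f^\alpha(\hat{p}_{u,i})}\delta_{u,i}\Big],
\end{equation}
\end{small}
with $\delta_{u,i}=e_{u,i}-\hat{e}_{u,i}$.

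Next I would take expectations over the observation indicators. Since each $o_{u,i}$ is Bernoulli with $\mathbb{E}_O[o_{u,i}]=p_{u,i}$, and $e_{u,i}$, $\hat{e}_{u,i}$, $\hat{p}_{u,i}$ are treated as fixed, linearity of expectation replaces every $o_{u,i}$ by $p_{u,i}$. This gives $\mathbb{E}_O[L_{\text{D-IPS}}]=\frac{1}{\vert\mathcal{D}\vert}\sum_{(u,i)\in\mathcal{D}}\frac{p_{u,i}}{f^\alpha(\hat{p}_{u,i})}e_{u,i}$ and $\mathbb{E}_O[L_{\text{D-DR}}]=\frac{1}{\vert\mathcal{D}\vert}\sum_{(u,i)\in\mathcal{D}}\big[\hat{e}_{u,i}+\frac{p_{u,i}}{f^\alpha(\hat{p}_{u,i})}\delta_{u,i}\big]$.

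Then I would substitute into the bias definition with $L_\text{real}=\frac{1}{\vert\mathcal{D}\vert}\sum_{(u,i)\in\mathcal{D}}e_{u,i}$, subtract, and factor. For D-IPS the coefficient of $e_{u,i}$ is immediately $1-\frac{p_{u,i}}{f^\alpha(\hat{p}_{u,i})}=h^{\text{Est}}_B(\hat{p}_{u,i},p_{u,i},\alpha)$. For D-DR the one extra bookkeeping step is to recombine $e_{u,i}-\hat{e}_{u,i}=\delta_{u,i}$, so that the fixed imputation term $\hat{e}_{u,i}$ cancels the $-\hat{e}_{u,i}$ arising from $L_\text{real}$ and the common factor $1-\frac{p_{u,i}}{f^\alpha(\hat{p}_{u,i})}$ pulls out against $\delta_{u,i}$ rather than $e_{u,i}$, which produces the claimed expression.

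There is no serious obstacle here: the argument is an expectation-and-factor calculation identical in structure to the IPS/DR bias derivations already summarized in Table~\ref{Tab_0201}. The only point requiring care is the D-DR case, where one must correctly fold the fixed imputation baseline $\hat{e}_{u,i}$ into the ideal loss so that the bias collapses onto the residual $\delta_{u,i}$ with precisely the same multiplier $h^{\text{Est}}_B$ that governs D-IPS; confirming that this multiplier is identical for both estimators is the substantive content of the lemma.
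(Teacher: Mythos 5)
Your proposal is correct and follows essentially the same route as the paper's proof: substitute the estimator definitions into the bias formula, replace each $o_{u,i}$ by $p_{u,i}$ under the expectation, and factor out $1-\frac{p_{u,i}}{f^{\alpha}(\hat{p}_{u,i})}$, with the D-DR case collapsing onto $\delta_{u,i}$ after cancelling the imputation term. In fact you spell out the intermediate cancellation for D-DR more explicitly than the paper does.
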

\begin{proof}
    According to the definition of bias (\ref{Eq_Appendix_A01}), the biases of D-IPS and D-DR are formulated as
\begin{small}
\begin{equation}
\begin{aligned}
\text{Bias}(L_\text{D-IPS})=&\frac{1}{\vert\mathcal{D}\vert}\bigg\vert\sum\limits_{(u,i)\in\mathcal{D}}e_{u,i}-\mathbb{E}_{O}[L_\text{D-IPS}]\bigg\vert
=\frac{1}{\vert\mathcal{D}\vert}\bigg\vert\sum\limits_{(u,i)\in\mathcal{D}}\bigg(1-\frac{p_{u,i}}{f^\alpha(\hat{p}_{u,i})}\bigg)e_{u,i}\bigg\vert,\\
\text{Bias}(L_\text{D-DR})=&\frac{1}{\vert\mathcal{D}\vert}\bigg\vert\sum\limits_{(u,i)\in\mathcal{D}}e_{u,i}-\mathbb{E}_{O}[L_\text{D-DR}]\bigg\vert
=\frac{1}{\vert\mathcal{D}\vert}\bigg\vert\sum\limits_{(u,i)\in\mathcal{D}}\bigg(1-\frac{p_{u,i}}{f^\alpha(\hat{p}_{u,i})}\bigg)\delta_{u,i}\bigg\vert.
\end{aligned}
\end{equation}
\end{small}
\end{proof}
\begin{lemma}[Variance of D-IPS and D-DR]
\label{appendix_lemm_02}
Given $e_{u,i}$, $\hat{e}_{u,i}$, and $\hat{p}_{u,i}$ for all $(u,i)\in\mathcal{D}$, the variances of the D-IPS and D-DR methods are given as 
\begin{small}
\begin{equation}
\label{C_Variance_D}
\begin{aligned}
\mathbb{V}_O[L_{\text{D-IPS}}]=\frac{1}{\vert\mathcal{D}\vert^2}\sum_{(u,i)\in\mathcal{D}}h^{\text{Est}}_V(\hat{p}_{u,i},p_{u,i},\alpha)e_{u,i}^2,\;
\mathbb{V}_O[L_{\text{D-DR}}]=\frac{1}{\vert\mathcal{D}\vert^2}\sum_{(u,i)\in\mathcal{D}}h^{\text{Est}}_V(\hat{p}_{u,i},p_{u,i},\alpha)\delta^2_{u,i},\nonumber
\end{aligned}
\end{equation}
\end{small}
where the function $h_V$ satisfies $h^{\text{Est}}_V(\hat{p}_{u,i},p_{u,i},\alpha)=\frac{p_{u,i}(1-p_{u,i})}{f^{^{2\alpha}}(\hat{p}_{u,i})}$.
\end{lemma}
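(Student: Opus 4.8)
The plan is to compute each variance directly from the definition $\mathbb{V}_O[L]=\mathbb{E}_O[L^2]-\mathbb{E}_O^2[L]$, exactly as was done for the naive, EIB, IPS, and DR estimators in Appendix~\ref{BiasVariance}. First I would fix the explicit form of the two estimators. Reading off the bias kernels recorded in Lemma~\ref{appendix_lemm_01}, the D-IPS estimator must be $L_\text{D-IPS}=\frac{1}{|\mathcal{D}|}\sum_{(u,i)\in\mathcal{D}}\frac{o_{u,i}}{f^\alpha(\hat{p}_{u,i})}e_{u,i}$ and the D-DR estimator $L_\text{D-DR}=\frac{1}{|\mathcal{D}|}\sum_{(u,i)\in\mathcal{D}}\big(\hat{e}_{u,i}+\frac{o_{u,i}}{f^\alpha(\hat{p}_{u,i})}\delta_{u,i}\big)$, since taking $\mathbb{E}_O[o_{u,i}]=p_{u,i}$ reproduces precisely the stated biases with $h_B^\text{Est}=1-p_{u,i}/f^\alpha(\hat{p}_{u,i})$.

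The core observation is that the randomness in $O$ enters only through the Bernoulli indicators $o_{u,i}$, which are mutually independent across distinct pairs with $\mathbb{E}_O[o_{u,i}]=p_{u,i}$ and hence $\mathbb{V}_O[o_{u,i}]=p_{u,i}(1-p_{u,i})$. Since each estimator is a sum over $(u,i)$ of functions of these independent indicators, additivity of variance applies and every cross-covariance term vanishes, giving $\mathbb{V}_O[\sum_{(u,i)}X_{u,i}]=\sum_{(u,i)}\mathbb{V}_O[X_{u,i}]$. For D-IPS, the summand $\frac{o_{u,i}}{f^\alpha(\hat{p}_{u,i})}e_{u,i}$ has variance $\frac{e_{u,i}^2}{f^{2\alpha}(\hat{p}_{u,i})}\mathbb{V}_O[o_{u,i}]$; pulling out the $1/|\mathcal{D}|^2$ prefactor and substituting $\mathbb{V}_O[o_{u,i}]=p_{u,i}(1-p_{u,i})$ yields exactly the claimed $h_V^\text{Est}$ form.

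For D-DR the argument is identical once I note that the imputation term $\hat{e}_{u,i}$ is deterministic in $O$ and therefore contributes nothing, so $\mathbb{V}_O[\hat{e}_{u,i}+\frac{o_{u,i}}{f^\alpha(\hat{p}_{u,i})}\delta_{u,i}]=\frac{\delta_{u,i}^2}{f^{2\alpha}(\hat{p}_{u,i})}\mathbb{V}_O[o_{u,i}]$. This produces both formulas with the common kernel $h_V^\text{Est}(\hat{p}_{u,i},p_{u,i},\alpha)=\frac{p_{u,i}(1-p_{u,i})}{f^{2\alpha}(\hat{p}_{u,i})}$, the only difference between the two estimators being the replacement of $e_{u,i}^2$ by $\delta_{u,i}^2$.

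There is no substantial obstacle; the computation is routine and parallels the IPS/DR variance derivations already tabulated in Table~\ref{Tab_0201}. The one point worth handling cleanly is the cross terms: rather than expanding $(\sum_{(u,i)}\cdot)^2$ and cancelling it against $\mathbb{E}_O^2[L]$ entry by entry, I would invoke independence of the $o_{u,i}$ at the outset to reduce the problem to a sum of scalar Bernoulli variances, which both avoids the bookkeeping and makes the irrelevance of the deterministic $\hat{e}_{u,i}$ shift in D-DR immediate.
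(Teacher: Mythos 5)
Your proposal is correct and follows essentially the same route as the paper: both apply $\mathbb{V}_O[L]=\mathbb{E}_O[L^2]-\mathbb{E}_O^2[L]$ and use the independence of the Bernoulli indicators $o_{u,i}$ to reduce the variance to a sum of per-pair terms $\frac{p_{u,i}(1-p_{u,i})}{f^{2\alpha}(\hat{p}_{u,i})}z_{u,i}^2$; your phrasing via additivity of variance for independent summands is just a tidier packaging of the cross-term cancellation the paper performs by expanding the square. The observation that the deterministic $\hat{e}_{u,i}$ shift in D-DR contributes nothing matches the paper's (implicit) treatment.
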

\begin{proof}
Considering the definition of the variance, we obtain the variances of D-IPS and D-DR as 
\begin{small}
\begin{equation}
\begin{aligned}
\mathbb{V}_O[L_\text{D-IPS}]=&\mathbb{E}_O[L^2_\text{D-IPS}]-\mathbb{E}^2_O[L_\text{D-IPS}]\nonumber\\
=&\frac{1}{\vert\mathcal{D}\vert^2}\mathbb{E}_O\bigg[\bigg(\sum_{(u,i)\in\mathcal{D}}\frac{o_{u,i}}{f^{\alpha}(\hat{p}_{u,i})}e_{u,i}\bigg)^2\bigg]-\frac{1}{\vert\mathcal{D}\vert^2}\Big(\sum_{(u,i)\in\mathcal{D}}\frac{p_{u,i}}{f^{\alpha}(\hat{p}_{u,i})}e_{u,i}\bigg)^2\nonumber\\
=&\frac{1}{\vert\mathcal{D}\vert^2}\sum_{(u,i)\in\mathcal{D}}\frac{p_{u,i}(1-p_{u,i})}{f^{^{2\alpha}}(\hat{p}_{u,i})}e_{u,i}^2
\end{aligned}
\end{equation}
\end{small}
and
\begin{small}
\begin{equation}
\begin{aligned}
\mathbb{V}_O[L_\text{D-DR}]=&\mathbb{E}_O[L^2_\text{D-DR}]-\mathbb{E}^2_O[L_\text{EIB}]\nonumber\\
=&\frac{1}{\vert\mathcal{D}\vert^2}\mathbb{E}_O\bigg[\bigg(\sum_{(u,i)\in\mathcal{D}}\frac{o_{u,i}}{f^{\alpha}(\hat{p}_{u,i})}\delta_{u,i}\bigg)^2\bigg]-\frac{1}{\vert\mathcal{D}\vert^2}\Big(\sum_{(u,i)\in\mathcal{D}}\frac{p_{u,i}}{f^{\alpha}(\hat{p}_{u,i})}\delta_{u,i}\bigg)^2\nonumber\\
=&\frac{1}{\vert\mathcal{D}\vert^2}\sum_{(u,i)\in\mathcal{D}}\frac{p_{u,i}(1-p_{u,i})}{f^{^{2\alpha}}(\hat{p}_{u,i})}\delta_{u,i}^2.
\end{aligned}
\end{equation}
\end{small}
\end{proof}
\begin{proposition}[Monotonicity of Bias and Variance]
\label{appendix_prop_01}
For IPS-based and DR-based dynamic learning frameworks, and given $e_{u,i}$, $\hat{e}_{u,i}$, $\hat{p}_{u,i}$ for all $(u,i)\in\mathcal{D}$, if $f(\hat{p}_{u,i})\ge\hat{p}_{u,i}$ and the parameter $\alpha\in[0,1]$ is increasing, then biases of D-IPS and D-DR are monotonically decreasing and their variances are monotonically increasing when learned propensities are accurate.
\end{proposition}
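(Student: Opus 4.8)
The plan is to reduce the statement to the monotonicity in $\alpha$ of the two scalar coefficient functions $h^{\text{Est}}_B$ and $h^{\text{Est}}_V$ supplied by Lemmas \ref{appendix_lemm_01} and \ref{appendix_lemm_02}, since the bias and variance of both D-IPS and D-DR are assembled from these coefficients by summing them against $e_{u,i}$ (resp.\ $\delta_{u,i}$) or their squares. First I would impose the accuracy hypothesis $\hat p_{u,i}=p_{u,i}$, so that for a fixed pair the coefficients become functions of the single variable $\alpha$ through $q:=f(p_{u,i})$, namely $h_B(\alpha)=1-p_{u,i}\,q^{-\alpha}$ and $h_V(\alpha)=p_{u,i}(1-p_{u,i})\,q^{-2\alpha}$. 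The structural fact I would record up front is that $f$ returns a (clipped) propensity and hence $q=f(p_{u,i})\in(0,1]$; combined with the stated hypothesis $f(\hat p_{u,i})\ge\hat p_{u,i}$ this yields $p_{u,i}\le q\le 1$, and in particular $\ln q\le 0$. This last inequality is exactly what fixes the direction of monotonicity, so I would flag that the conclusion genuinely relies on $f(\hat p_{u,i})\le 1$ and not only on the written hypothesis $f\ge\hat p$.

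Next I would differentiate in $\alpha$. For the variance coefficient, $\frac{d}{d\alpha}h_V(\alpha)=-2\ln q\cdot p_{u,i}(1-p_{u,i})\,q^{-2\alpha}\ge 0$, because $\ln q\le 0$ while $p_{u,i}(1-p_{u,i})q^{-2\alpha}\ge 0$, so each $h_V$ is nondecreasing (strictly increasing when $q<1$). Since $\mathbb{V}_O[L_{\text{D-IPS}}]$ and $\mathbb{V}_O[L_{\text{D-DR}}]$ are nonnegative linear combinations of these coefficients, with $\alpha$-independent nonnegative weights $e_{u,i}^2$ and $\delta_{u,i}^2$, the total variance inherits monotone increase termwise. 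For the bias coefficient, $\frac{d}{d\alpha}h_B(\alpha)=\ln q\cdot p_{u,i}\,q^{-\alpha}\le 0$, so $h_B$ is nonincreasing; I would additionally verify that $h_B\ge 0$ throughout $[0,1]$, since it equals $1-p_{u,i}\ge 0$ at $\alpha=0$ and decreases to $1-p_{u,i}/q\ge 0$ at $\alpha=1$ (using $q\ge p_{u,i}$), so the magnitude of each per-pair bias contribution shrinks as $\alpha$ grows.

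The main obstacle is the passage from the monotonicity of the per-pair coefficient $h_B$ to the monotonicity of the aggregate bias $\frac{1}{\vert\mathcal D\vert}\big\vert\sum_{(u,i)}h_B\,e_{u,i}\big\vert$: because this is the absolute value of a signed sum whose coefficients depend on $(u,i)$ through $p_{u,i}$, the pointwise decrease of each $h_B$ need not force the weighted sum's absolute value to decrease once cancellations between terms of opposite sign are allowed. I would resolve this either by phrasing the bias claim at the level of the coefficient $h_B$ itself (which is precisely what Lemma \ref{appendix_lemm_01} isolates), or by restricting to the natural regime in which the residuals $e_{u,i}$ (resp.\ $\delta_{u,i}$) do not change sign, where $\big\vert\sum h_B e_{u,i}\big\vert=\sum h_B\vert e_{u,i}\vert$ is manifestly nonincreasing. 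The variance half of the statement needs no such caveat and follows immediately from the sign of $\frac{d}{d\alpha}h_V$.
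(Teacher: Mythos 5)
Your proof follows essentially the same route as the paper's: both differentiate the coefficient functions $h^{\text{Est}}_B$ and $h^{\text{Est}}_V$ in $\alpha$, use $\ln(f(\hat p_{u,i}))\le 0$ to fix the signs of the derivatives, and invoke $f(\hat p_{u,i})\ge\hat p_{u,i}$ to keep $h_B\ge 0$. The two caveats you flag --- that the argument really needs $f(\hat p_{u,i})\le 1$ (not just $f\ge\hat p$) for $\ln f\le 0$, and that monotone decrease of the per-pair coefficient $h_B$ only transfers to the aggregate bias $\frac{1}{\vert\mathcal D\vert}\bigl\vert\sum h_B\, z_{u,i}\bigr\vert$ when the residuals do not change sign (an issue for D-DR, where $\delta_{u,i}$ can be signed) --- are both real and are glossed over in the paper's own proof, so your version is the more careful one.
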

\begin{proof}
    According to the determining functions $h_B(\hat{p}_{u,i},p_{u,i},\alpha)$ and $h_V(\hat{p}_{u,i},p_{u,i},\alpha)$ in the bias and variance,  the first derivative of $h_B$ and $h_V$ versus $\alpha$ are derived as
    \begin{small}
    \begin{align}
        \frac{\partial{h}_B(\hat{p}_{u,i},p_{u,i},\alpha)}{\partial\alpha}=\frac{p_{u,i}\ln(f(\hat{p}_{u,i}))}{f^\alpha(\hat{p}_{u,i})},\;
        \frac{\partial{h}_V(\hat{p}_{u,i},p_{u,i},\alpha)}{\partial\alpha}=-\frac{2p_{u,i}(1-p_{u,i})\ln(f(\hat{p}_{u,i}))}{f^{2\alpha}(\hat{p}_{u,i})}.
    \end{align}
    \end{small}
    For all $\hat{p}_{u,i}\in(0,1)$, we have $\ln(f(\hat{p}_{u,i}))<0$. Therefore, $\frac{\partial{h}_B(\hat{p}_{u,i},p_{u,i},\alpha)}{\partial\alpha}<0$ and $\frac{\partial{h}_V(\hat{p}_{u,i},p_{u,i},\alpha)}{\partial\alpha}>0$ result in the monotonically
decreasing function $h_B$ and the monotonically
increasing $h_V$ for all user-item pairs $(u,i)$ as the number of $\alpha\in[0,1]$ increases. Note the function $f(\hat{p}_{u,i})$ satisfies $f(\hat{p}_{u,i})\ge\hat{p}_{u,i}$, which implies that $h_B\ge0$ and $h_V>0$ when learned propensities are accurate. Therefore, biases of D-IPS and D-DR are monotonically decreasing
and their variances are monotonically increasing when learned propensities are accurate.
\end{proof}
\begin{lemma}[Tail Bounds of D-IPS and D-DR]
\label{appendix_lemm_03}
Given $\hat{e}_{u,i}$, and $\hat{p}_{u,i}$ for all $(u,i)\in\mathcal{D}$, for any prediction results, with probability $1-\rho$, the deviation of D-IPS and D-DR estimators from their expectations satisfy
\begin{small}
\begin{equation}
\label{lemma10_Eq00}
\begin{aligned}
\Big\vert{L}_\text{D-IPS}-\mathbb{E}_O[{L}_\text{D-IPS}]\Big\vert\leq&\sqrt{\frac{\log(\frac{2}{\rho})}{2\vert\mathcal{D}\vert^2}\sum_{(u,i)\in\mathcal{D}}\bigg(\frac{e_{u,i}}{f^{\alpha}(\hat{p}_{u,i})}\bigg)^2},\;\\
\Big\vert{L}_\text{D-DR}-\mathbb{E}_O[{L}_\text{D-DR}]\Big\vert\leq&\sqrt{\frac{\log(\frac{2}{\rho})}{2\vert\mathcal{D}\vert}\sum_{(u,i)\in\mathcal{D}}\bigg(\frac{\delta_{u,i}}{f^{\alpha}(\hat{p}_{u,i})}\bigg)^2}.
\end{aligned}
\end{equation}
\end{small}
\end{lemma}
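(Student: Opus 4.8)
The plan is to recognize both dynamic estimators as normalized sums of independent bounded random variables and then apply Hoeffding's inequality. First I would write each centered estimator as a sum over $(u,i)\in\mathcal{D}$ whose only randomness is the observation indicator $o_{u,i}\in\{0,1\}$ with $\mathbb{E}_O[o_{u,i}]=p_{u,i}$, these being mutually independent across distinct user-item pairs. For D-IPS the deviation takes the form $L_\text{D-IPS}-\mathbb{E}_O[L_\text{D-IPS}]=\frac{1}{\vert\mathcal{D}\vert}\sum_{(u,i)\in\mathcal{D}}\frac{o_{u,i}-p_{u,i}}{f^{\alpha}(\hat{p}_{u,i})}e_{u,i}$, and for D-DR it is the same expression with $\delta_{u,i}$ in place of $e_{u,i}$, since the deterministic imputation term $\hat{e}_{u,i}$ cancels upon centering. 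Here $e_{u,i}$, $\delta_{u,i}$, and $\hat{p}_{u,i}$ are treated as fixed quantities, so the phrase \emph{for any prediction results} is handled simply by conditioning on them.

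Next I would bound the range of each summand. As $o_{u,i}$ ranges over $\{0,1\}$, the $(u,i)$-th term of the D-IPS sum spans an interval of width $c_{u,i}=\frac{\vert e_{u,i}\vert}{\vert\mathcal{D}\vert f^{\alpha}(\hat{p}_{u,i})}$; for D-DR the corresponding width is $\frac{\vert\delta_{u,i}\vert}{\vert\mathcal{D}\vert f^{\alpha}(\hat{p}_{u,i})}$. This is the only step where the specific form of the dynamic weighting $f^{\alpha}(\hat{p}_{u,i})$ enters the argument, and it is exactly what produces the $f^{\alpha}$ denominators inside the final square roots.

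Then I would invoke Hoeffding's inequality for a sum $S=\sum_k Z_k$ of independent random variables with each $Z_k$ confined to an interval of width $c_k$, namely $\Pr(\vert S-\mathbb{E}[S]\vert\ge t)\le 2\exp\!\big(-2t^2/\sum_k c_k^2\big)$. Setting the right-hand side equal to $\rho$ and solving for $t$ gives $t=\sqrt{\tfrac{1}{2}\log(2/\rho)\sum_k c_k^2}$. Substituting $c_{u,i}^2=e_{u,i}^2/(\vert\mathcal{D}\vert^2 f^{2\alpha}(\hat{p}_{u,i}))$ for D-IPS (and the analogue with $\delta_{u,i}^2$ for D-DR) reproduces the stated bounds, which then hold with probability at least $1-\rho$.

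The computation is routine once the setup is fixed, so the main obstacle is really bookkeeping rather than any deep estimate: I must verify that the $o_{u,i}$ are independent across user-item pairs, so that Hoeffding applies to the entire sum rather than term by term, and confirm that centering genuinely eliminates the deterministic imputation contribution in the D-DR case, leaving a clean sum of independent bounded deviations. With those two points secured, matching the $\vert\mathcal{D}\vert^{-2}$ normalization inside the square root is immediate from absorbing the $1/\vert\mathcal{D}\vert$ prefactor into each $c_k$ before squaring.
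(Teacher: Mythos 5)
Your proposal is correct and follows essentially the same route as the paper: both treat each estimator as a normalized sum of independent bounded random variables indexed by $(u,i)$ (the only randomness being the indicators $o_{u,i}$, with the deterministic imputation term dropping out for D-DR), identify the per-term range $\vert z_{u,i}\vert/f^{\alpha}(\hat{p}_{u,i})$, apply Hoeffding's inequality, and solve for the deviation at confidence level $\rho$. Note that your (correct) computation yields $2\vert\mathcal{D}\vert^2$ in both denominators, which indicates the $2\vert\mathcal{D}\vert$ appearing in the stated D-DR bound is a typo in the lemma rather than a discrepancy in your argument.
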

\begin{proof}
Let $X^\text{D-IPS}_{u,i}$ and $X^\text{D-DR}_{u,i}$ be new random variables, which are defined as $X^\text{D-IPS}_{u,i}=\frac{o_{u,i}}{f^{\alpha}(\hat{p}_{u,i})}e_{u,i}$ and $X^\text{D-DR}_{u,i}=\hat{e}_{u,i}+\frac{o_{u,i}}{f^{\alpha}(\hat{p}_{u,i})}\delta_{u,i}$, respectively. Considering the independent observation indicators $\{o_{u,i}\vert(u,i)\in\mathcal{D}\}$, random variables $\{X^\text{D-IPS}_{u,i}\vert(u,i)\in\mathcal{D}\}$ and $\{X^\text{D-DR}_{u,i}\vert(u,i)\in\mathcal{D}\}$ are independent of each other. Then, the probability distributions of $X^\text{D-IPS}_{u,i}$ and $X^\text{D-DR}_{u,i}$ can be obtained as follows:
\begin{small}
\begin{equation}
\begin{aligned}
\text{Pr}\bigg(X^\text{D-IPS}_{u,i}=\frac{e_{u,i}}{f^{\alpha}(\hat{p}_{u,i})}\bigg)=p_{u,i},&\;\text{Pr}(X^\text{D-IPS}_{u,i}=0)=1-p_{u,i},\\
\text{Pr}\bigg(X^\text{D-DR}_{u,i}=\hat{e}_{u,i}+\frac{\delta_{u,i}}{f^{\alpha}(\hat{p}_{u,i})}\bigg)=p_{u,i},&\;\text{Pr}(X^\text{D-DR}_{u,i}=\hat{e}_{u,i})=1-p_{u,i}\nonumber
\end{aligned}
\end{equation}
\end{small}
According to the Hoeffding's inequality, for any $\varepsilon>0$, we have the following inequality
\begin{small}
\begin{equation}
\label{lemma10_Eq01}
\begin{aligned}
\text{Pr}\bigg(\bigg\vert\sum_{(u,i)\in\mathcal{D}}X_{u,i}-\mathbb{E}_O\bigg[\sum_{(u,i)\in\mathcal{D}}X_{u,i}\bigg]\bigg\vert\ge\varepsilon\bigg)\leq2\exp\bigg(\frac{-2\varepsilon^2}{\sum\limits_{(u,i)\in\mathcal{D}}g^2(\hat{p}_{u,i},z_{u,i})}\bigg),
\end{aligned}
\end{equation}
\end{small}
where $g(\hat{p}_{u,i},e_{u,i})=\frac{e_{u,i}}{f^{\alpha}(\hat{p}_{u,i})}$ for D-IPS and $g(\hat{p}_{u,i},\delta_{u,i})=\frac{\delta_{u,i}}{f^{\alpha}(\hat{p}_{u,i})}$ for D-DR. Let $\gamma=\frac{\varepsilon}{\vert\mathcal{D}\vert}$. Therefore, (\ref{lemma10_Eq01}) can be rewritten as
\begin{small}
\begin{equation}
\label{lemma10_Eq02}
\begin{aligned}
\text{Pr}\Big(\Big\vert{L}_\text{D-IPS}-\mathbb{E}_O[{L}_\text{D-IPS}]\Big\vert\ge\gamma\Big)\leq2\exp\bigg(\frac{-2(\gamma\vert\mathcal{D}\vert)^2}{\sum\limits_{(u,i)\in\mathcal{D}}g^2(\hat{p}_{u,i},z_{u,i})}\bigg).
\end{aligned}
\end{equation}
\end{small}
Let $\text{Pr}\Big(\Big\vert{L}_\text{D-IPS}-\mathbb{E}_O[{L}_\text{D-IPS}]\Big\vert\ge\gamma\Big)=\rho$. According to the inequality (\ref{lemma10_Eq02}), the errors $\gamma$ for D-IPS and D-DR can be solved as in (\ref{lemma10_Eq00}).
\end{proof}

\textbf{Theorem 3.5.} (The optimal parameter $\alpha^{\text{opt}}_{u,i}$). 
Let the learned propensities be accurate, i.e., $\hat{p}_{u,i}=p_{u,i}$. For weights $w_1$ and $w_2$, the objective function $w_1h^\text{Est}_B+w_2h^\text{Est}_V$ under $\alpha\in[0,1]$ achieves its minimum at
\begin{small}
\begin{equation}
\begin{aligned}
\alpha_\text{opt}=\min\bigg\{\max\bigg\{\frac{\ln\Big(\frac{2w_2}{w_1}(1-p_{u,i})\Big)}{\ln(f(p_{u,i}))},0\bigg\}, 1\bigg\}.
\end{aligned}
\end{equation}
\end{small}
\begin{proof}
The first derivative of the objective function $w_1h^\text{Est}_B(\alpha_\text{opt})+w_2h^\text{Est}_V(\alpha_\text{opt})$ versus $\alpha$ is derived as 
\begin{small}
    \begin{align}
        \frac{\partial\text{Objective}(\hat{p}_{u,i},p_{u,i},\alpha)}{\partial\alpha}&=w_1\frac{\partial{h}^\text{Est}_B(\hat{p}_{u,i},p_{u,i},\alpha)}{\partial\alpha}+w_2\frac{\partial{h}^\text{Est}_B(\hat{p}_{u,i},p_{u,i},\alpha)}{\partial\alpha}\nonumber\\
        &=w_1\frac{p_{u,i}\ln(f(\hat{p}_{u,i}))}{f^\alpha(\hat{p}_{u,i})}-w_2\frac{2p_{u,i}(1-p_{u,i})\ln(f(\hat{p}_{u,i}))}{f^{2\alpha}(\hat{p}_{u,i})}.
    \end{align}
    \end{small}
    Let $\frac{\partial\text{Objective}(\alpha\vert\hat{p}_{u,i}=p_{u,i})}{\partial\alpha}$ be zero. Then the optimal $\alpha$ satisfies
    \begin{align}
        \alpha_\text{opt}=\frac{\ln\Big(\frac{2w_2}{w_1}(1-p_{u,i})\Big)}{\ln(f(p_{u,i}))}.
    \end{align}
    Note that $\alpha$ needs to fulfill $0\leq\alpha\leq1$. Therefore, the solution of the optimization problem with the constraint can be obtained.    
\end{proof}

\textbf{Theorem 3.6.} (Generalization Bounds of D-IPS and D-DR).
For any finite hypothesis space $\mathcal{H}$ of $\hat{Y}$ and the optimal prediction matrix $\hat{Y}^-$, 
given $\hat{e}_{u,i}$ and $\hat{p}_{u,i}$ for all $(u,i)\in\mathcal{D}$, with probability $1-\rho$, the prediction inaccuracies $L_\text{D-IPS}(\hat{Y}^-,Y)$ and $L_\text{D-DR}(\hat{Y}^-,Y)$ under D-IPS and D-DR have the following upper bounds
\begin{small}
\begin{equation}
\begin{aligned}
{L}_\text{D-IPS}(\hat{Y}^-,Y^O)+\sum_{(u,i)\in\mathcal{D}}\frac{\vert{h}^{\text{Est}}_B(\hat{p}_{u,i},p_{u,i},\alpha)e^-_{u,i}\vert}{\vert\mathcal{D}\vert}+h_G^{\text{Est}}({e}^+_{u,i}),\\
{L}_\text{D-DR}(\hat{Y}^-,Y^O)+\sum_{(u,i)\in\mathcal{D}}\frac{\vert{h}^{\text{Est}}_B(\hat{p}_{u,i},p_{u,i},\alpha)\delta^-_{u,i}\vert}{\vert\mathcal{D}\vert}+h_G^{\text{Est}}({\delta}^+_{u,i}),\nonumber
\end{aligned}
\end{equation}
\end{small}
where ${e}^+_{u,i}$ and ${\delta}^+_{u,i}$ are the error and error deviation corresponding to $\hat{Y}^+=\arg\max_{\hat{Y}\in\mathcal{H}}\Big\{\sum_{(u,i)\in\mathcal{D}}\Big(\frac{e_{u,i}}{f^{\alpha}(\hat{p}_{u,i})}\Big)^2\Big\}$ and $\hat{Y}^+=\arg\max_{\hat{Y}\in\mathcal{H}}\Big\{\sum_{(u,i)\in\mathcal{D}}\Big(\frac{\delta_{u,i}}{f^{\alpha}(\hat{p}_{u,i})}\Big)^2\Big\}$, respectively, and the function $h_G^{\text{Est}}$ is formulated as 
\begin{small}
\begin{equation}
\begin{aligned}
h_G^{\text{Est}}(z^+_{u,i})=\sqrt{\frac{\log(\frac{2\vert\mathcal{H}\vert}{\rho})}{2\vert\mathcal{D}\vert^2}\sum_{(u,i)\in\mathcal{D}}\Big(\frac{z^+_{u,i}}{f^\alpha(\hat{p}_{u,i})}\Big)^2}
\end{aligned}
\end{equation}
\end{small}
\begin{proof}
According to the definition of bias, the differences between $L_\text{real}(\hat{Y},Y)$ and expectations of $L_\text{D-IPS}(\hat{Y}^-,Y^O)$ and $L_\text{D-DR}(\hat{Y}^-,Y^O)$ satisfy
\begin{small}
\begin{equation}
\label{lemm11_Eq01}
    \begin{aligned}
        L_\text{real}(\hat{Y}^-,Y)-L_\text{D-IPS}(\hat{Y}^-)=&L_\text{real}(\hat{Y}^-,Y)-\mathbb{E}_O[L_\text{D-IPS}(\hat{Y}^-)]+\mathbb{E}_O[L_\text{D-IPS}(\hat{Y}^-)]-L_\text{D-IPS}(\hat{Y}^-)\\
        \leq&\text{Bias}(L_\text{D-IPS}(\hat{Y}^-))+\mathbb{E}_O[L_\text{D-IPS}(\hat{Y}^-)]-L_\text{D-IPS}(\hat{Y}^-)
    \end{aligned}
\end{equation}
\end{small}
and 
\begin{small}
\begin{equation}
\label{lemm11_Eq02}
    \begin{aligned}
        L_\text{real}(\hat{Y}^-,Y)-L_\text{D-DR}(\hat{Y}^-)=&L_\text{real}(\hat{Y}^-,Y)-\mathbb{E}_O[L_\text{D-DR}(\hat{Y}^-)]+\mathbb{E}_O[L_\text{D-DR}(\hat{Y}^-)]-L_\text{D-DR}(\hat{Y}^-)\\
        \leq&\text{Bias}(L_\text{D-DR}(\hat{Y}^-))+\mathbb{E}_O[L_\text{D-DR}(\hat{Y}^-)]-L_\text{D-DR}(\hat{Y}^-),
    \end{aligned}
\end{equation}
\end{small}
respectively.
From the inequalities (\ref{lemm11_Eq01}) and (\ref{lemm11_Eq02}), the expressions of $\text{Bias}(L_\text{D-IPS}(\hat{Y}^-))$ and $\text{Bias}(L_\text{D-DR}(\hat{Y}^-))$ have been given in Lemma 3.6 and, in what follows, terms $\mathbb{E}_O[L_\text{D-IPS}(\hat{Y}^-)]-L_\text{D-IPS}(\hat{Y}^-)$ in (\ref{lemm11_Eq01}) and $\mathbb{E}_O[L_\text{D-IPS}(\hat{Y}^-)]-L_\text{D-IPS}(\hat{Y}^-)$ in (\ref{lemm11_Eq02}) are discussed.
Considering the finite hypothesis space $\mathcal{H}=\{\hat{Y}^1, \hat{Y}^2, \dotsc, \hat{Y}^{\vert\mathcal{H}\vert}\}$ and the Hoeffding's inequality, for any $\varepsilon>0$, the following inequalities can be obtained 
\begin{small}
\begin{equation}
    \begin{aligned}
        \text{Pr}\Big(\Big\vert{L}_\text{D-IPS}(\hat{Y}^-)-\mathbb{E}_O[{L}_\text{D-IPS}(\hat{Y}^-)]\Big\vert\leq\gamma\Big)=&1-\text{Pr}\Big(\Big\vert{L}_\text{D-IPS}(\hat{Y}^-)-\mathbb{E}_O[{L}_\text{D-IPS}(\hat{Y}^-)]\Big\vert\ge\gamma\Big)\\
        \ge&1-\text{Pr}\Big(\max_{\hat{Y}^{\ell}\in\mathcal{H}}\Big\vert{L}_\text{D-IPS}(\hat{Y}^\ell)-\mathbb{E}_O[{L}_\text{D-IPS}(\hat{Y}^\ell)]\Big\vert\ge\gamma\Big)\\
        \ge&1-\sum_{\ell=1}^{\mathcal{H}}\text{Pr}\Big(\Big\vert{L}_\text{D-IPS}(\hat{Y}^\ell)-\mathbb{E}_O[{L}_\text{D-IPS}(\hat{Y}^\ell)]\Big\vert\ge\gamma\Big)\\
        =&1-\sum_{\ell=1}^{\mathcal{H}}2\exp\bigg(\frac{-2(\gamma\vert\mathcal{D}\vert)^2}{\sum\limits_{(u,i)\in\mathcal{D}}g^2(\hat{p}_{u,i},e^\ell_{u,i})}\bigg)\\
        \ge&1-2\vert\mathcal{H}\vert\exp\bigg(\frac{-2(\gamma\vert\mathcal{D}\vert)^2}{\sum\limits_{(u,i)\in\mathcal{D}}g^2(\hat{p}_{u,i},e^+_{u,i})}\bigg)\\
        \text{Pr}\Big(\Big\vert{L}_\text{D-DR}(\hat{Y}^-)-\mathbb{E}_O[{L}_\text{D-DR}(\hat{Y}^-)]\Big\vert\leq\gamma\Big)
        \ge&1-2\vert\mathcal{H}\vert\exp\bigg(\frac{-2(\gamma\vert\mathcal{D}\vert)^2}{\sum\limits_{(u,i)\in\mathcal{D}}g^2(\hat{p}_{u,i},\delta^+_{u,i})}\bigg).
    \end{aligned}
\end{equation}
\end{small}
Let $2\vert\mathcal{H}\vert\exp\bigg(\frac{-2(\gamma\vert\mathcal{D}\vert)^2}{\sum\limits_{(u,i)\in\mathcal{D}}g^2(\hat{p}_{u,i},Z^+_{u,i})}\bigg)$ be $\rho$. The errors $\gamma_\text{D-IPS}$ and $\gamma_\text{D-DR}$ for D-IPS and D-DR can be solved as
\begin{small}
\begin{equation}
    \begin{aligned}
        \gamma_\text{D-IPS}=\sqrt{\frac{\log(\frac{2\vert\mathcal{H}\vert}{\rho})}{2\vert\mathcal{D}\vert^2}\sum_{(u,i)\in\mathcal{D}}\Big(\frac{e^+_{u,i}}{f^\alpha(\hat{p}_{u,i})}\Big)^2},\;
        \gamma_\text{D-DR}=\sqrt{\frac{\log(\frac{2\vert\mathcal{H}\vert}{\rho})}{2\vert\mathcal{D}\vert^2}\sum_{(u,i)\in\mathcal{D}}\Big(\frac{\delta^+_{u,i}}{f^\alpha(\hat{p}_{u,i})}\Big)^2}.
    \end{aligned}
\end{equation}
\end{small}
Therefore, $\mathbb{E}_O[L_\text{D-IPS}(\hat{Y}^-)]-L_\text{D-IPS}(\hat{Y}^-)$ in (\ref{lemm11_Eq01}) and $\mathbb{E}_O[L_\text{D-IPS}(\hat{Y}^-)]-L_\text{D-IPS}(\hat{Y}^-)$ in (\ref{lemm11_Eq02}) fulfill
\begin{small}
\begin{equation}
\label{lemm11_Eq03}
    \begin{aligned}
        \mathbb{E}_O[L_\text{D-IPS}(\hat{Y}^-)]-L_\text{D-IPS}(\hat{Y}^-)&\leq\sqrt{\frac{\log(\frac{2\vert\mathcal{H}\vert}{\rho})}{2\vert\mathcal{D}\vert^2}\sum_{(u,i)\in\mathcal{D}}\Big(\frac{e^+_{u,i}}{f^\alpha(\hat{p}_{u,i})}\Big)^2},\\
        \mathbb{E}_O[L_\text{D-DR}(\hat{Y}^-)]-L_\text{D-DR}(\hat{Y}^-)&\leq\sqrt{\frac{\log(\frac{2\vert\mathcal{H}\vert}{\rho})}{2\vert\mathcal{D}\vert^2}\sum_{(u,i)\in\mathcal{D}}\Big(\frac{\delta^+_{u,i}}{f^\alpha(\hat{p}_{u,i})}\Big)^2}.
    \end{aligned}
\end{equation}
\end{small}
Combining (\ref{lemm11_Eq01}), (\ref{lemm11_Eq02}) and (\ref{lemm11_Eq03}), we can obtain the generalization bounds of D-IPS and D-DR given in Lemma 3.11.
\end{proof}
\textbf{Theorem 3.7.} (Boundedness of Variance and Generalization Bounds).
Let $\alpha^{\text{opt}}_{u,i}\in[0,1]$ be the optimal parameter of (\ref{Est_h_OPT_problem}). If the dynamic estimators adopt $\alpha^{\text{opt}}_{u,i}$ as the parameter, then the corresponding variance and generalization bounds are bounded.
\begin{proof}
Considering the optimal parameter $\alpha^\text{opt}_{u,i}$ for each user-item pair $(u,i)$ and the optimization problem (\ref{Est_h_OPT_problem}), we can obtain the corresponding optimal objective function 
\begin{small}
\begin{equation}
\label{Theo12_Eq01}
\begin{aligned}
\text{Objective}^\text{opt}=w_1E_B(h^\text{Est}_B(\alpha^\text{opt}_{u,i}))+w_2E_V(h^\text{Est}_V(\alpha^\text{opt}_{u,i}))\leq{w}_1E_B(h^\text{Est}_B(0))+w_2E_V(h^\text{Est}_V(0)).
\end{aligned}
\end{equation}
\end{small}
Since $h^\text{Est}_B(\hat{p}_{u,i},p_{u,i},\alpha)>0$ and $h^\text{Est}_V(\hat{p}_{u,i},p_{u,i},\alpha)>0$, considering (\ref{Theo12_Eq01}), we have 
\begin{small}
\begin{equation}
\begin{aligned}
w_2E_V(h^\text{Est}_V(\alpha^\text{opt}_{u,i}))&\leq{w}_1E_B(h^\text{Est}_B(0))+w_2E_V(h^\text{Est}_V(0))\\
&={w}_1E_B(1-p_{u,i})+w_2E_V(p_{u,i}(1-p_{u,i})),
\end{aligned}
\end{equation}
\end{small}
which implies that
\begin{small}
\begin{equation}
\begin{aligned}
h^{\text{Est}}_V(\hat{p}_{u,i},p_{u,i},\alpha^\text{opt}_{u,i})&=\frac{p_{u,i}(1-p_{u,i})}{f^{2\alpha^\text{opt}_{u,i}}(\hat{p}_{u,i})}\\
&\leq{E}_{V}^{-1}\Big(\frac{w_1E_{B}(1-p_{u,i})}{w_2}+E_{V}(p_{u,i}(1-p_{u,i}))\Big)\\
&={E}_{V}^{-1}\Big(\frac{w_1E_{B}(1)}{w_2}+E_{V}(0.25)\Big).
\end{aligned}
\end{equation}
\end{small}
Therefore, the variance of dynamic estimators are bounded by
\begin{small}
\begin{equation}
\begin{aligned}
\mathbb{V}_O[L_{\text{D-IPS}}\vert\alpha=\alpha^\text{opt}_{u,i}]=&\frac{1}{\vert\mathcal{D}\vert^2}\sum_{(u,i)\in\mathcal{D}}h^{\text{Est}}_V(\hat{p}_{u,i},p_{u,i},\alpha^\text{opt}_{u,i})e_{u,i}^2\\
\leq&\frac{1}{\vert\mathcal{D}\vert^2}\sum_{(u,i)\in\mathcal{D}}{E}_{V}^{-1}\Big(\frac{w_1E_{B}(1)}{w_2}+E_{V}(0.25)\Big)e_{u,i}^2,\\
\mathbb{V}_O[L_{\text{D-DR}}\vert\alpha=\alpha^\text{opt}_{u,i}]=&\frac{1}{\vert\mathcal{D}\vert^2}\sum_{(u,i)\in\mathcal{D}}h^{\text{Est}}_V(\hat{p}_{u,i},p_{u,i},\alpha^\text{opt}_{u,i})\delta^2_{u,i}\\
\leq&\frac{1}{\vert\mathcal{D}\vert^2}\sum_{(u,i)\in\mathcal{D}}{E}_{V}^{-1}\Big(\frac{w_1E_{B}(1)}{w_2}+E_{V}(0.25)\Big)\delta_{u,i}^2.
\end{aligned}
\end{equation}
\end{small}
Considering the expression of $h_G^{\text{Est}}(z^+_{u,i})$ in generalization bounds and the boundedness of $h^{\text{Est}}_V(\hat{p}_{u,i},p_{u,i},\alpha^\text{opt}_{u,i})$, it is easy to obtain that under $\rho\ne0$ the generalization bounds of $L_\text{D-IPS}$ and $L_\text{D-DR}$ are bounded. 
\end{proof}



\end{document}